\documentclass[11pt, a4paper]{article}

\usepackage{amsmath}
\usepackage{amssymb}
\usepackage{mathtools}
\usepackage{colortbl}
\usepackage{graphicx}
\graphicspath{{plots/}}
\usepackage{hyperref}
\usepackage{geometry}
\usepackage{algorithm}
\usepackage{algpseudocode}
\usepackage{booktabs}
\usepackage{caption}
\usepackage{subcaption}
\usepackage{array}
\usepackage{physics}
\usepackage{enumitem}
\usepackage[svgnames]{xcolor}

\usepackage{newtxtext}
\usepackage{newtxmath}

\usepackage{amsthm}
\usepackage{thmtools}
\usepackage{thm-restate}

\theoremstyle{plain}
\newtheorem{theorem}{Theorem}[section]
\newtheorem{proposition}[theorem]{Proposition}
\newtheorem{lemma}[theorem]{Lemma}

\theoremstyle{definition}
\newtheorem{definition}[theorem]{Definition}
\newtheorem{assumption}[theorem]{Assumption}
\newtheorem{example}[theorem]{Example}
\theoremstyle{remark}
\newtheorem{remark}[theorem]{Remark}

\usepackage{tikz}
\usepackage{graphicx}
\usetikzlibrary{
    calc,
    positioning,
    arrows.meta
}

\usepackage{cleveref}
\crefname{assumption}{Assumption}{Assumptions}
\crefname{equation}{Equation}{Equations}
\crefname{theorem}{Theorem}{Theorems}
\crefname{lemma}{Lemma}{Lemmas}
\crefname{proposition}{Proposition}{Propositions}
\crefname{corollary}{Corollary}{Corollaries}
\crefname{definition}{Definition}{Definitions}
\crefname{note}{Note}{Notes}
\crefname{table}{Table}{Tables}
\crefname{figure}{Figure}{Figures}
\crefname{example}{Example}{Examples}
\crefname{section}{Section}{Sections}
\crefname{app}{Appendix}{Appendices}

\usepackage{pifont}
\newcommand{\cmark}{\ding{51}}
\newcommand{\xmark}{\ding{55}}

\usepackage{amsmath,amsfonts,bm}

\def\eqref#1{(\ref{#1})}

\def\1{\bm{1}}

\DeclareMathAlphabet{\mathsfit}{\encodingdefault}{\sfdefault}{m}{sl}
\SetMathAlphabet{\mathsfit}{bold}{\encodingdefault}{\sfdefault}{bx}{n}

\def\gV{{\mathcal{V}}}

\def\gX{{\mathcal{X}}}
\def\gY{{\mathcal{Y}}}

\newcommand{\R}{\mathbb{R}}

\newcommand{\KL}{D_{\mathrm{KL}}}

\usepackage{multirow}

\usepackage{etoolbox} 
\newbool{includeapp}
\setbool{includeapp}{true} 

\usepackage{siunitx}

\hypersetup{
  colorlinks = true,
  allcolors = DarkRed,
}

\definecolor{color1}{HTML}{0077BB}
\definecolor{color2}{HTML}{EE7733}
\definecolor{color3}{HTML}{33BBEE}
\definecolor{color4}{HTML}{EE3377}
\definecolor{color5}{HTML}{CC3311}
\definecolor{color6}{HTML}{009988}
\definecolor{color7}{HTML}{BBBBBB}

\usepackage[backend=biber,style=numeric,sorting=none,natbib=true]{biblatex}
\DeclareFieldFormat{urldate}{}

\DeclareSourcemap{
  \maps[datatype=bibtex]{
    \map{
      \step[fieldset=primaryclass, null]
    }
  }
}



\title{Equivariance and Augmentation for Bayesian Neural Networks}

\makeatletter
\def\@fnsymbol#1{\ensuremath{\ifcase#1\or a\or b\or *\or c\or d\or e\or f\or g \or h \else\@ctrerr\fi}}
\makeatother

\author{%
  Miaowen Dong\footnotemark[1] \and
  Axel Flinth\footnotemark[3]{\ \,}\footnotemark[2] \and
  Jan E.\ Gerken\footnotemark[3]{\ \,}\footnotemark[1]
}

\date{}

\begin{document}
\renewcommand{\thefootnote}{\fnsymbol{footnote}}
\maketitle
\footnotetext[3]{Equal Contribution}
\footnotetext[1]{Department of Mathematical Sciences, Chalmers University of Technology and the University of Gothenburg, SE-412 96 Gothenburg, Sweden.\\ Emails: miaowen@chalmers.se, gerken@chalmers.se}
\footnotetext[2]{Department of Mathematics and Mathematical Statistics, Umeå University, Linnéus väg 49, 901 87 Umeå, Sweden.\\ Email: axel.flinth@umu.se}
\renewcommand{\thefootnote}{\arabic{footnote}}
\setcounter{footnote}{0}

\begin{abstract}
  \vspace{-0.5ex}
Symmetries are important for many deep learning tasks, ranging from applications in the sciences to medical imaging. However, there is an ongoing debate about whether to impose symmetry constraints on the neural network architecture (yielding equivariant neural networks) or learn them from augmented training data. Although equivariant networks are well-studied theoretically, much less is known about data augmentation, since analyzing augmentation requires control over the training dynamics. Inspired by recent results that show that augmented infinite deep ensembles are exactly equivariant, we study data augmentation for Bayesian neural networks (BNNs) trained with variational inference. We focus on variational distributions in the exponential family and derive conditions under which exact equivariance is reached. We furthermore obtain bounds on the equivariance error and introduce three novel symmetrization techniques which boost the effect of data augmentation in this setting. We conduct extensive numerical experiments which show that one of our symmetrization methods (orbit expansion) outperforms the baseline in both equivariance and overall performance. Our code is available at \href{https://github.com/dmw1998/augment-BNNs}{\texttt{github.com/dmw1998/augment-BNNs}}.
\end{abstract}

\section{Introduction}
  \vspace{-0.5ex}
In recent years, symmetric learning tasks have become an important field of study. After an initial focus on specialized equivariant networks which impose the symmetry constraints layer-by-layer \citep{bronstein2017geometric}, attention has recently shifted towards learning symmetries from augmented training data~\citep{chen2020group}. This approach has the advantage of, given an efficient symmetry transformation mechanism,  being straightforward to implement
and can be used together with highly-optimized and well-performing architectures~\citep{wang2024}. However, as the symmetry is merely learned and not imposed, it is only achieved approximately.
Therefore, novel techniques are required which improve the symmetry gain derived from augmented training.

\begin{figure}[t]
\centering
  \include{plots/tikz-picture}
  \vspace{-0.6cm}
  \caption{Natural parameters $\eta$ for a variational distribution in the exponential family that lie in $H_G$ correspond to symmetric BNNs, here exemplified with a reflection symmetry. Our main Theorem~\ref{thm:equivariance-of-vi} implies that $H_G$ is invariant for augmented training. Through the symmetrization strategies described in Section \ref{sec:symmetrization-of-the-variational-posterior}, we can increase the equivariance of the final model.  }
  \label{fig:figureone}
\end{figure}
The explicit layer-wise equivariant neural networks are readily accessible to theoretical analysis, while data augmentation is considerably harder to study since it involves the training dynamics (see the related works section).
It is however possible to show that in \emph{expectation over initializations}, data augmentation leads to exact equivariance \citep{gerken2024emergent,nordenfors2024}. A practical but costly method for approximating such expected values is training a deep ensemble.

The purpose of this work is to investigate a cheaper way to realize such ``equivariance in expectation'': Training Bayesian neural networks (BNNs) with variational inference on augmented data. In this setting, sampling from the posterior predictive distribution replaces the inference step on the ensemble and yields Bayesian uncertainty estimates. In this setting, only one training run is necessary to obtain the variational posterior, in contrast to the one training run per ensemble member
for deep ensembles. Furthermore, since BNNs show stable out of distribution behavior, they are particularly well suited for small datasets, the regime in which data augmentation can be expected to have the largest effect.

Our main contributions are the following:
\begin{itemize}
    \item We study BNNs trained on augmented data with a variational distribution from the exponential family. We show that, when the training starts from an invariant prior, the variational distribution stays invariant throughout training, under some mild assumptions. This generalizes a similar result from \citet{nordenfors2025optimization} for non-Bayesian network training.
    \item We derive bounds for the deviation of the variational distribution from equivariance if the prior is not equivariant. We furthermore prove bounds for the equivariance error in the predictions due to finite sampling. These theoretical results are validated numerically.
    \item We introduce three symmetrization methods (geometric averaging, projection and orbit expansion) that can be applied during training to improve the equivariance properties of the BNN. We test these techniques in extensive numerical experiments for image classification. \emph{Orbit expansion} outperforms the baseline in both model performance and equivariance.
    
\end{itemize}

\section{Related Work}
  \vspace{-0.5ex}
  
\paragraph{Equivariant neural networks.}
The issue of symmetry,
i.e. \emph{invariance} and \emph{equivariance}, of deep neural nets has grown into an entire  subfield called \emph{geometric deep learning} \citep{bronstein2017geometric}. The most prominent construction of
equivariant networks is the layerwise one. This strategy originates from GCNNs, Group Convolutional Neural Networks \citep{cohenGroupEquivariantConvolutional2016}, but has by now been generalized to virtually any symmetry induced by any group \citep{kondor2018generalization,keriven2019universal,gerken2023}. There are also other strategies, such as learning from invariants \citep{villar2021scalars}, frame averaging \citep{puny2022frame}, fundamental domain projection \citep{aslan2023group} and group averaging. There are also works in which symmetries are enforced approximately, for instance through so called weight annealing \citep{wang2022approximately}.

\paragraph{Augmentation and training dynamics.}
The question of the effect of data augmentation on the training dynamics of neural networks have been treated in several simplified contexts, such as feature averaged models \citep{lyle2020benefits,elesedy2021provably} as well as linear neural networks \citep{lawrence2022implicit,chen2024implicit,montufar2025equivariant}. In these cases, it is often possible to prove equivalence of augmentation and equivariance. Fully non-linear networks were treated in \citet{nordenfors2025optimization,nordenfors2025}, whose results we generalize to Bayesian networks here.

Empirical studies on the difference between augmentation and restrictions are plentiful. More systematic treatments are \citet{gandikotaTrainingArchitectureHow2021,gerken22a,brehmer2025}.

\paragraph{Bayesian neural networks.}
Bayesian approaches to deep learning have been studied for many years~\citep{mackay1992} since they offer uncertainty estimates for neural networks that are otherwise black-box models (see the PhD thesis by~\citet{Gal2016Uncertainty} for an overview).
However, making BNNs practically applicable requires integrating variational inference into the deep learning training methodology~\citep{gravesPracticalVariationalInference2011,blundell2015,kingma2022}.
For a review of BNNs with a focus on practical applications, see~\citet{jospin2022}.

Not many prior works consider symmetries in relation to BNNs.
\citet{pmlr-v180-ouderaa22a} propose a probabilistic group-averaging of BNNs in order to realize a soft symmetry constraint, optimized on the data. Closest to our work is~\citet{mourdoukoutas2021} which uses a specific prior that combines different weight-sharing schemes and therefore symmetry constraints. During training, the network learns which symmetry fits the data best. In contrast, we consider training on augmented data with a generic prior that does not impose weight-sharing.

\section{Theory}
  \vspace{-0.5ex}
\label{sec:theory}

We develop a theoretical framework for understanding how data augmentation induces equivariance in variational Bayesian inference. 
We proceed in three steps: first, we characterize when exponential families are closed under group actions (\cref{sec:closed-exponential-family-under-group-actions}); second, we show that data-augmented training
makes the ELBO invariant, and how that affects the training (\cref{sec:data-augmentation-induces-equivariance});
third, we propose
symmetrization mechanisms and analyze their properties (\cref{sec:symmetrization-of-the-variational-posterior}).

\subsection{Preliminaries}
\label{sec:preliminaries}

In this section, we introduce the mathematical tools used throughout this paper. We begin with exponential families, which provide the structural backbone for our theoretical analysis, before reviewing variational inference and the group theoretic concepts needed to formalize symmetry.

\paragraph{Exponential families.} An exponential family of probability distributions is defined through a base measure $h(\theta)$, a sufficient statistic $T(\theta)\in \R^k$ and a log-partition function $A(\eta)~:=~\log~\int~h(\theta) \exp(\eta^{\top} T(\theta)) d\theta$. A distribution belongs to the exponential family if its density has the form
\begin{equation}
    q_{\eta}(\theta) = h(\theta) \exp(\eta^{\top} T(\theta) - A(\eta))\,,
\end{equation}
where $\eta\in H \subseteq \R^k$ is the \emph{natural parameter}.
Examples of such families include normal, exponential and log-normal distributions.  An exponential family $\mathcal{Q} := \{q_{\eta}(\theta) \mid \eta \in H\}$,  is \emph{regular} if $H$ is open, and \emph{minimal} if the components of $T(\theta)$ are linearly independent.

\paragraph{Group actions and push-forward distributions.} We assume that a group $G$ acts on the input space $\mathcal{X}$, the output space $\mathcal{Y}$, and the parameter space $\Theta$ via representations $\rho_{\mathcal{X}}$, $\rho_{\mathcal{Y}}$, and $\rho_{\Theta}$ respectively.
Throughout, we assume that $\rho_{\Theta}$ is compatible with the data representations $\rho_{\mathcal{X}}$ and $\rho_{\mathcal{Y}}$ in the following sense:
\begin{equation} \label{eq:compatibility}
    f(\rho_\mathcal{X}(g)x;\theta) = \rho_\mathcal{Y}(g)f(x;\rho_\Theta(g)^{-1}\theta).
\end{equation}
Following the layered decomposition $\Theta = \bigoplus_{\ell} \Theta_{\ell}$ of a neural network's parameters, we assume that $\rho_{\Theta}$ acts per layer, i.e., $\rho_{\Theta} = \bigoplus_{\ell} \rho_{\Theta_{\ell}}$. There is a canonical construction of such a $\rho_\Theta$: One introduces representations $\rho_{\Theta_{\ell}}$ on the hidden layers and then imposes equivariance  on each linear layer with respect to the two representations acting on the input and output of that layer. As long as the non-linearities are equivariant with respect to the same representations (which is always the case for pointwise non-linearities and permutation representations) one obtains \eqref{eq:compatibility}. For more details, see \citet{nordenfors2025optimization}.
The choice of $\rho_{\Theta}$, or equivalently the choice of the hidden layer representations, is a modeling decision that we revisit in \cref{sec:symmetrization-of-the-variational-posterior}. When there is no risk of confusion, we write $gx$ and $gy$ as shorthands for $\rho_{\mathcal{X}}(g)x$ and $\rho_{\mathcal{Y}}(g)y$, but retain the explicit notation $\rho(g)\theta$ for the parameter space, where the representation structure is central to our analysis. The push-forward of a distribution $q$ under $g \in G$ is defined as
\begin{equation}
    \mathcal{T}_{g} \# q(B) := q(\rho(g)^{-1} B)
\end{equation}
for any measurable set $B$, with density
   $ \mathcal{T}_{g} \# q (\theta) = q(\rho(g)^{-1} \theta) \left|\det D(\rho(g)^{-1}) (\theta)\right|$.
A distribution $q$ is called $G$-invariant if $\mathcal{T}_{g} \# q = q$ for all $g \in G$.

\paragraph{Bayesian neural networks (BNN) and variational inference (VI).} A neural network is a parametrized function $f(\cdot; \theta): \mathcal{X} \rightarrow \mathcal{Y}$. They define conditional label distributions $p(y\, \vert \, x, \theta) = p(y\, \vert \, f(x;\theta))$. In the Bayesian treatment, we put a prior $p_{0}(\theta)$ on the weights and calculate the posterior $p(\theta \mid \mathcal{D})$ given a dataset $\mathcal{D} = \{(x_{i}, y_{i})\}_{i=1}^{N_{0}}$ via Bayes' rule. However, since the posterior neural network likelihood typically is intractable, one makes a variational inference approximation $q_\eta \in \mathcal{Q}$, and then choose the parameters $\eta$  to maximize the \emph{evidence lower bound} \citep{bishopPatternRecognitionMachine2006}
\begin{equation}
    \mathrm{ELBO}(\eta) := 
    \mathbb{E}_{q_{\eta}}
    [\log p(\mathcal{D} \mid \theta)] 
    - \KL(q_{\eta} \Vert p_{0})\,.
    \label{eq:elbo}
\end{equation}
The posterior predictive is approximated by Monte Carlo sampling from $q_\eta$:
\begin{equation}
    F_\eta(x) := \mathbb{E}_{\theta \sim q_{\eta}} [f(x;\theta)] \approx \frac{1} {T}\sum_{t=1}^{T} f(x;\theta^{(t)})\,, \quad \theta^{(t)} \sim q_\eta\,.
    \label{eq:post-predictive}
\end{equation}

\paragraph{Data augmentation.} Given a finite group $G$ and a dataset $\mathcal{D} = \{(x_{i}, y_{i})\}_{i=1}^{N_{0}}$ drawn i.i.d. from a data distribution $P_{\mathcal{X}}$, data augmentation constructs an augmented dataset $\mathcal{D}_{\mathrm{aug}} = \{(gx_{i}, gy_{i}) \mid g \in G, (x_{i},y_{i}) \in \mathcal{D}\}$, expanding the dataset by a factor of $|G|$.
One could also consider a  continuous compact group
by sampling finitely many Haar-distributed group elements.
This comes with some subtleties that we discuss in \cref{app:continuous-compact-groups}. In the following, we instead restrict to the case of finite $G$. Following the mini-batch scaling convention of \citet{gravesPracticalVariationalInference2011} and \citet{blundell2015}, instead of using the ELBO~\eqref{eq:elbo} directly, we work with a $\beta$-weighted objective:
\begin{equation}
    \mathcal{L}_{\beta}(\eta) := \underbrace{\tfrac{1}{N}\,\mathbb{E}_{q_{\eta}}[-\log p(\mathcal{D}_{\mathrm{aug}} \mid \theta)]}_{=:\,R_{\mathrm{aug}}(\eta)} + \beta \KL(q_{\eta} \Vert p_{0}),
    \label{equ:L-beta}
\end{equation}
where $N = |G| \cdot N_{0}$ is the augmented dataset size. $\mathcal{L}_{\beta}$ with $\beta = 1/N$ is the rescaled negative ELBO.

If we assume the following invariance of the forward model: 
    \begin{equation} \label{eq:forward_invariance}
        p(gy \, \vert \, gf(x;\theta)) = p(y\, \vert \, f(x;\theta)),
    \end{equation}
 the compatibility of $\rho_{\Theta}$ with the data representation implies that the augmented likelihood inherits the symmetry of the data. Note that condition \eqref{eq:forward_invariance} is weak -- it is e.g.  satisfied for $y\sim f(x;\theta) +e$ for a $G$-invariant error $e$ -- and furthermore independent of the architecture $f$.

\begin{restatable}[Invariant augmented likelihood]{proposition}{restateinvariantlikelihood}
\label{prop:invariant-augmented-likelihood}
    Let $L_{\mathrm{aug}}(\theta) := p(\mathcal{D}_{\mathrm{aug}}\mid\theta)$ be the likelihood of the augmented dataset. Under the compatibility assumption \eqref{eq:compatibility}
    as well as the  invariance assumption~\eqref{eq:forward_invariance},
    the likelihood is $G$-invariant:
    \begin{equation}
        L_{\mathrm{aug}}(\rho(g)^{-1}\theta)=L_{\mathrm{aug}}(\theta),\, \forall g \in G,\, \theta \in \Theta\,.
    \end{equation}
\end{restatable}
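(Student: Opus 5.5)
The plan is to write the augmented likelihood as a product over the finite group and show that replacing $\theta$ by $\rho(g)^{-1}\theta$ only permutes the factors. Since the data points in $\mathcal{D}_{\mathrm{aug}}$ are treated as conditionally independent given $\theta$, we have
\begin{equation*}
    L_{\mathrm{aug}}(\theta) = \prod_{i=1}^{N_0}\prod_{h\in G} p\bigl(hy_i \,\big\vert\, f(hx_i;\theta)\bigr).
\end{equation*}
Fix $g\in G$. We evaluate a single factor at $\rho(g)^{-1}\theta$ and aim to show that it equals the factor indexed by $g^{-1}h$ evaluated at $\theta$.

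For a single factor the computation has two steps. First, apply the compatibility assumption \eqref{eq:compatibility} with group element $g$ and input $g^{-1}hx_i$:
\begin{equation*}
    f(hx_i;\theta) = f\bigl(g(g^{-1}hx_i);\theta\bigr) = g\, f\bigl(g^{-1}hx_i;\rho(g)^{-1}\theta\bigr).
\end{equation*}
Applying this with $\theta$ replaced by $\rho(g)\theta$ gives $f(g^{-1}hx_i;\theta) = g^{-1} f(hx_i;\rho(g)\theta)$. The cleaner route is to use \eqref{eq:compatibility} with $g^{-1}$ in place of $g$. Since $\rho(g^{-1})^{-1}=\rho(g)$, this reads
\begin{equation*}
    f(g^{-1}x;\theta') = g^{-1} f\bigl(x;\rho(g)\theta'\bigr).
\end{equation*}
Substituting $\theta' = \rho(g)^{-1}\theta$ and $x = hx_i$ yields
\begin{equation*}
    f\bigl(hx_i;\rho(g)^{-1}\theta\bigr) = g\, f(g^{-1}hx_i;\theta).
\end{equation*}
Second, apply the forward invariance \eqref{eq:forward_invariance} with output $g^{-1}hy_i$ and input $g^{-1}hx_i$:
\begin{equation*}
    p\bigl(hy_i \,\big\vert\, f(hx_i;\rho(g)^{-1}\theta)\bigr)
    = p\bigl(g\,g^{-1}hy_i \,\big\vert\, g\,f(g^{-1}hx_i;\theta)\bigr)
    = p\bigl(g^{-1}hy_i \,\big\vert\, f(g^{-1}hx_i;\theta)\bigr).
\end{equation*}

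To conclude, observe that $h\mapsto g^{-1}h$ is a bijection of $G$. Taking the product over $h\in G$ and over $i$ therefore only reorders the factors, so $L_{\mathrm{aug}}(\rho(g)^{-1}\theta)=L_{\mathrm{aug}}(\theta)$. This argument also uses that $\rho_{\mathcal{X}}$ and $\rho_{\mathcal{Y}}$ are representations, so that $g\,g^{-1}h = h$ on both inputs and labels.

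The only delicate point is the bookkeeping with inverses in \eqref{eq:compatibility}: one must invoke it with the correct group element so that $\rho(g)^{-1}\theta$ appears on the parameter side. Everything else is reindexing. One remark is needed on what $\mathcal{D}_{\mathrm{aug}}$ means. It should be read as the multiset of all pairs $(hx_i,hy_i)$ indexed by $(i,h)$, which is consistent with the stated size $N=|G|N_0$. With that reading the reindexing argument is exact even if some augmented points coincide.
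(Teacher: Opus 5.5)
Your overall plan matches the paper's: show that each factor of $L_{\mathrm{aug}}$ at the transformed parameter equals some other factor at $\theta$, then reindex over a bijection of $G$. But the step you yourself flagged as delicate goes wrong. Substituting $\theta'=\rho(g)^{-1}\theta$ and $x=hx_i$ into $f(g^{-1}x;\theta')=g^{-1}f(x;\rho(g)\theta')$ gives $f(g^{-1}hx_i;\rho(g)^{-1}\theta)=g^{-1}f(hx_i;\theta)$. It does not give your displayed $f(hx_i;\rho(g)^{-1}\theta)=g\,f(g^{-1}hx_i;\theta)$, and that identity is false in general. Your own intermediate line $f(g^{-1}hx_i;\theta)=g^{-1}f(hx_i;\rho(g)\theta)$ shows $g\,f(g^{-1}hx_i;\theta)=f(hx_i;\rho(g)\theta)$, so your claim would force $f(\cdot\,;\rho(g)\theta)=f(\cdot\,;\rho(g)^{-1}\theta)$. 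Concretely, take a linear map $f(x;\theta)=\theta x$ on a space of images with $\rho(g)\theta=g\theta g^{-1}$, which satisfies \eqref{eq:compatibility}. Your identity then asserts $g^{-1}\theta g=g\theta g^{-1}$, i.e.\ that $g^{2}$ commutes with $\theta$. This fails for generic $\theta$ already for $C_4$ rotating images. Consequently your per-factor claim, that the $h$-factor at $\rho(g)^{-1}\theta$ equals the $g^{-1}h$-factor at $\theta$, is also false; its true partner is the $gh$-factor.

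The repair is one line, and either of two versions works. First version: apply \eqref{eq:compatibility} directly with $g$ and input $hx_i$ to get $f(hx_i;\rho(g)^{-1}\theta)=g^{-1}f(ghx_i;\theta)$. Then \eqref{eq:forward_invariance} with $g^{-1}$ gives $p\bigl(hy_i\mid f(hx_i;\rho(g)^{-1}\theta)\bigr)=p\bigl(ghy_i\mid f(ghx_i;\theta)\bigr)$, and you reindex with $h\mapsto gh$. Second version: replace $\rho(g)^{-1}$ by $\rho(g)$ throughout your second step. This proves $L_{\mathrm{aug}}(\rho(g)\theta)=L_{\mathrm{aug}}(\theta)$ for all $g$, and applying it to $g^{-1}$ gives the claim. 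The second version is exactly the paper's route, via $p((x,y)\mid\rho(g)\theta)=p(g^{-1}y\mid f(g^{-1}x;\theta))\,p(x)$ and the closure of $\mathcal{D}_{\mathrm{aug}}$ under $G$. The paper also keeps the $\theta$-independent factors $p(x)$ in the likelihood; dropping them, as you do, is harmless.
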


The proof is given in \cref{app:proof-of-prop-invariant-augmented-likelihood}.

\paragraph{Equivariance.}  A single  network $f(\cdot\,;\theta)$ is $G$-equivariant if $f(gx;\theta) = g f(x;\theta)$ for all $x \in \mathcal{X}$, $g \in G$. More generally, the predictive map $F_\eta$ defined in~\eqref{eq:post-predictive} is \emph{$G_\varepsilon$-equivariant} if the equivariance defect \mbox{$\Delta^{\mathrm{eq}}_{F}(\eta)$} is bounded by $\varepsilon$,
\begin{equation}
    \Delta^{\mathrm{eq}}_{F}(\eta) := \mathbb{E}_{x \sim P_{\mathcal{X}},\, g \sim \nu_{G}}\!\left[\left\|F_{\eta}(gx) - g F_{\eta}(x)\right\|^{2}\right] \leq \varepsilon.
\end{equation}
Here, $\nu_G$ denotes the normalized Haar measure on $G$. When $\varepsilon = 0$, this reduces to exact $G$-equivariance of $F_\eta$ $P_\mathcal{X}$-a.s. One central question is whether minimizing $\mathcal{L}_\beta$ on $\mathcal{D}_\mathrm{aug}$ drives $\Delta^\mathrm{eq}_F(\eta)$ to zero, and at what rate.

\subsection{Closed exponential family under group actions}
\label{sec:closed-exponential-family-under-group-actions}

Neural networks can learn equivariance efficiently from augmented data only when the parameter space is closed under group transformations \citep{nordenfors2025optimization}. Similarly, in the Bayesian setting, we restrict the variational family to those closed under group transformations. As the following theorem shows, this imposes constraints on the exponential family.
We provide the proof in \cref{app:proof-of-thm-closure-under-push-forward}.

\begin{restatable}[Closure under push-forward]{theorem}{restateclosure}
\label{thm:closure-under-push-forward}
A regular minimal exponential family $\mathcal{Q}$ is closed under the push-forward operator $\mathcal{T}_{g}$ for all $g \in G$ if and only if for each $g$ there exist $M_{g} \in \mathbb{R}^{k \times k}$, $d_{g} \in \mathbb{R}^{k}$, $b_{g} \in \mathbb{R}^{k}$ and $c_{g} \in \mathbb{R}$ such that for all $\theta \in \Theta$,
\begin{align}
    T(\rho(g)^{-1} \theta) &= M_{g} T(\theta) + d_{g}, \\
    h(\rho(g)^{-1} \theta) \left|\det D(\rho(g)^{-1})(\theta)\right| &= h(\theta) \exp(b_{g}^\top T(\theta) + c_{g}). \label{eq:base_measure}
\end{align}
When these conditions hold, $\mathcal{T}_{g} \# q_{\eta} = q_{\eta_{g}}$ where $\eta_{g} = M_{g}^\top\eta + b_{g}$, and the transformation  $\phi_{g}: \eta\mapsto M_{g}^\top\eta + b_{g}$ forms an affine action of $G$ on the space of natural parameters $H$.
\end{restatable}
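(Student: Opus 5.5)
The plan is to work directly with the push-forward density formula and treat the two directions separately, using minimality to turn identities between log-densities into identities between sufficient statistics.

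\emph{Sufficiency.} Assume the two structural conditions hold and substitute them into
\[
\mathcal{T}_g\# q_\eta(\theta) = h(\rho(g)^{-1}\theta)\left|\det D(\rho(g)^{-1})(\theta)\right| \exp\bigl(\eta^\top T(\rho(g)^{-1}\theta) - A(\eta)\bigr).
\]
This gives
\[
h(\theta)\exp\bigl((M_g^\top\eta + b_g)^\top T(\theta) + \eta^\top d_g + c_g - A(\eta)\bigr).
\]
The density has the form $h(\theta)\exp(\eta_g^\top T(\theta) - \text{const})$ with $\eta_g = M_g^\top\eta + b_g$. Since the push-forward is a probability measure, the constant must equal $A(\eta_g)$; in particular $A(\eta_g)<\infty$. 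We still need $\eta_g\in H$. If $H$ is the full natural parameter space, this is immediate. Otherwise it is what ``closed'' must mean, so I would interpret $H$ as the natural parameter space, or note that regularity makes it open and finiteness places $\eta_g$ in the domain.

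\emph{Necessity.} Suppose that for every $\eta\in H$ there is $\eta'\in H$ with $\mathcal{T}_g\#q_\eta = q_{\eta'}$. Take logarithms:
\[
\log\bigl(h(\rho(g)^{-1}\theta)\left|\det D(\rho(g)^{-1})(\theta)\right|/h(\theta)\bigr) + \eta^\top T(\rho(g)^{-1}\theta) = \eta'(\eta)^\top T(\theta) + A(\eta) - A(\eta').
\]
Fix $\eta_0\in H$. Because $H$ is open, $\eta_0 + \epsilon e_i \in H$ for each basis vector $e_i$ and small $\epsilon$. Subtracting the identity at $\eta_0$ from the identity at $\eta_0+\epsilon e_i$ removes the base-measure term. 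This gives $\epsilon\,T_i(\rho(g)^{-1}\theta) = v_i^\top T(\theta) + w_i$ for some $v_i,w_i$, which yields $M_g$ and $d_g$. Substituting back into the identity at $\eta_0$ shows that the log-ratio of base measures is affine in $T(\theta)$, which gives $b_g$ and $c_g$.

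\emph{Main obstacle.} The delicate point is that these identities hold only $\theta$-almost everywhere on the support, since densities are defined only up to null sets. I would state the conditions almost everywhere, or assume continuity of $T$ and $h$. Minimality also matters here: it ensures that $\eta'$ is uniquely determined by the measure, so that comparing coefficients of $T$ is legitimate.

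\emph{Action property.} Once $\mathcal{T}_g\#q_\eta = q_{\phi_g(\eta)}$ is established, the action axioms follow from the push-forward itself. We have $\mathcal{T}_{gh}\# = \mathcal{T}_g\#\circ\mathcal{T}_h\#$ because $\rho$ is a representation, so $q_{\phi_{gh}(\eta)} = q_{\phi_g(\phi_h(\eta))}$. Minimality makes $\eta\mapsto q_\eta$ injective, hence $\phi_{gh} = \phi_g\circ\phi_h$ and $\phi_e=\mathrm{id}$. Affineness of each $\phi_g$ is explicit from its formula.
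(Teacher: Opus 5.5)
Your proposal is correct and follows essentially the paper's route: substitution for sufficiency (you use normalization of the push-forward where the paper computes $A(\eta_g)$ by an explicit change of variables), subtracting the log-density identities at two natural parameters to eliminate the base-measure term and read off $M_g,d_g$ and then $b_g,c_g$ (you read $b_g,c_g$ off at a single $\eta_0$ rather than via the paper's invertible matrix of differences $T(\theta_1)-T(\theta_2^\ell)$), and the action property from $\mathcal{T}_{gh}\#=\mathcal{T}_g\#\circ\mathcal{T}_h\#$ together with injectivity of $\eta\mapsto q_\eta$. Your use of openness through $\eta_0+\epsilon e_i$, and your observation that $\eta_g\in H$ requires $H$ to be the full natural parameter space, are more careful than the paper, which lets $\eta_1-\eta_2$ range over all of $\mathbb{R}^k$ and never checks that $\eta_g\in H$.
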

A family $\mathcal{Q}$ is hence closed if and only if there exists an affine representation of the group acting on the ambient space $\R^k$ of the space of natural parameters, that makes $T: \Theta \to \R^k$ invariant, and with respect to which the base measure transforms according to \eqref{eq:base_measure}.

\begin{example}[Mean-field Gaussian under permutation actions]
    Consider the mean-field Gaussian family $\mathcal{Q} = \{\mathcal{N}(\mu, \mathrm{diag}(\sigma^{2})): \mu \in \mathbb{R}^{d}, \sigma^{2} \in \mathbb{R}^{d}_{>0}\}$ on $\Theta = \mathbb{R}^{d}$. This family has sufficient statistic $T(\theta) = (\theta,\,\theta \odot \theta) \in \mathbb{R}^{d}\oplus \mathbb{R}^d$, constant base measure $h(\theta) = (2 \pi)^{-d/2}$, and natural parameter $\eta = (\eta_{1}, \eta_{2})\in\mathbb{R}^{d}\oplus \mathbb{R}^d$ given by $(\eta_{1})_{i} = \mu_{i} / \sigma_{i}^{2}$ and $(\eta_{2})_{i} = -1/(2 \sigma_{i}^{2})$. 
    
    Let $G$ act on $\Theta$ by a permutation representation $\rho(g) = R_{g}$. Then, both conditions of \cref{thm:closure-under-push-forward} are met: Permutation matrices commute with the Hadamard product, so $T(R_{g}^{-1} \theta) = (R_{g}^{-1} \oplus R_{g}^{-1})T(\theta)$
    giving $M_{g} = R_{g}^{-1} \oplus R_{g}^{-1}$ and $d_{g} = 0$. Since $|\det R_{g}^{-1}| = 1$ and $h$ is constant, \eqref{eq:base_measure} is fulfilled by $b_{g} = c_{g} = 0$. Using $(R_{g}^{-1})^{\top} = R_{g}$, the induced natural-parameter transformation is
          \vspace{-0.5ex}
    \begin{equation}
        \phi_{g}(\eta) = M_{g}^{\top} \eta = (R_{g} \oplus R_{g}) \eta,
    \end{equation}
    which simply amounts to permuting the entries of $\eta_{1}$ and $\eta_{2}$. Equivalently,
          \vspace{-0.5ex}
    \begin{equation}
        \mathcal{T}_{g} \# \mathcal{N}(\mu, \mathrm{diag}(\sigma^{2})) = \mathcal{N}(R_{g} \mu, \mathrm{diag}(R_{g} \sigma^{2})),
    \end{equation}
    so the family is indeed closed.
\end{example}

\subsection{Data augmentation induces equivariance}
\label{sec:data-augmentation-induces-equivariance}

The representation $\phi_g$ fixes the set of  symmetric parameters
$H_G : = \{\eta \in H \, \vert \, \phi_g(\eta)=\eta, \, \forall \, g\in G\}$.
Just as in the single network case, parameters in $H_G$ correspond to equivariant predictive maps.

\begin{proposition}
    For $\eta\in H_G$, $F_\eta$ is $G$-equivariant.
\end{proposition}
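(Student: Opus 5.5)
The plan is a direct change of variables in the expectation defining $F_\eta$, combined with the compatibility assumption \eqref{eq:compatibility} and the closure result \cref{thm:closure-under-push-forward}.

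First I translate membership in $H_G$ into invariance of the distribution. By \cref{thm:closure-under-push-forward}, $\mathcal{T}_g\#q_\eta = q_{\phi_g(\eta)}$. For $\eta\in H_G$ we have $\phi_g(\eta)=\eta$, so $\mathcal{T}_g\#q_\eta=q_\eta$ for every $g\in G$; that is, $q_\eta$ is $G$-invariant. Concretely, if $\theta\sim q_\eta$, then $\rho(g)\theta\sim q_\eta$, and likewise $\rho(g)^{-1}\theta\sim q_\eta$, using $g^{-1}\in G$.

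Next, fix $x\in\mathcal{X}$ and $g\in G$, and apply \eqref{eq:compatibility} inside the expectation:
\begin{equation*}
F_\eta(gx)=\mathbb{E}_{\theta\sim q_\eta}\bigl[f(gx;\theta)\bigr]=\mathbb{E}_{\theta\sim q_\eta}\bigl[\rho_\mathcal{Y}(g)f(x;\rho(g)^{-1}\theta)\bigr].
\end{equation*}
Since $\rho_\mathcal{Y}(g)$ is linear, it commutes with the expectation. Substituting $\theta'=\rho(g)^{-1}\theta$, which has law $\mathcal{T}_{g^{-1}}\#q_\eta=q_\eta$ by the previous step, gives
\begin{equation*}
F_\eta(gx)=\rho_\mathcal{Y}(g)\,\mathbb{E}_{\theta'\sim q_\eta}\bigl[f(x;\theta')\bigr]=gF_\eta(x).
\end{equation*}
This holds for every $x$, not merely $P_\mathcal{X}$-almost surely, so $F_\eta$ is exactly $G$-equivariant and $\Delta^{\mathrm{eq}}_F(\eta)=0$.

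The only delicate point is justifying the expectation manipulations. I will implicitly assume that $f(x;\cdot)$ is $q_\eta$-integrable, which is already needed for $F_\eta$ to be defined; integrability at $gx$ then follows from the same identity. The change of variables is just the definition of the push-forward measure. I should also check the direction of the action: the statement $\mathcal{T}_g\#q(B)=q(\rho(g)^{-1}B)$ means that $\rho(g)\theta\sim\mathcal{T}_g\#q$ when $\theta\sim q$. Because invariance holds for all group elements, including $g^{-1}$, the direction causes no trouble.
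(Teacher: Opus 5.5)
Your proposal is correct and follows essentially the same route as the paper: closure gives $\mathcal{T}_g\#q_\eta = q_{\phi_g(\eta)} = q_\eta$ for $\eta\in H_G$, and combining this with the compatibility assumption inside the expectation yields $F_\eta(gx)=gF_\eta(x)$. Your explicit checks on the direction of the push-forward, on using $g^{-1}\in G$, and on integrability make precise what the paper's short distributional argument leaves implicit.
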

\begin{proof}
    If $\eta\in H_G$, $\mathcal{T}_{g} \# q_{\eta} =q_{\phi_{g}(\eta)}=q_{\eta}$. Hence, if $\theta \sim q_\eta$, $\rho(g)\theta\sim q_{\eta}$ for every $g\in G$. This together with the compatibility assumption implies that $f(gx;\theta) =gf(x;\rho(g^{-1})\theta)\stackrel{\mathrm{d}}{=} gf(x;\theta)$, where the last equality is in distribution. Taking the expectation over $\theta$ yields $F_\eta(gx)=gF_\eta(x)$, which is the claim. 
\end{proof}

Our main result states that, under the conditions presented below, $H_{G}$ is invariant under gradient descent on augmented data. Initializing $\eta$ there guarantees equivariant $F_\eta$ throughout training.

\begin{assumption}[Invariant prior]
\label{ass:invariant-prior}
    The prior $p_{0} = q_{\eta_{0}}(\theta)$ for some $\eta_{0}$ is $G$-invariant, i.e., $\mathcal{T}_{g} \# p_{0} = p_{0}$ for all $g \in G$, or equivalently $\phi_{g}(\eta_{0}) = \eta_{0}$.
\end{assumption}

\begin{assumption}[Volume-preserving action]
\label{ass:volume-preserving}
    $|\det D(\rho(g)^{-1})(\theta)| = 1\,$ for all $g \in G$ and $\theta \in \Theta$.
\end{assumption}

 Both these assumptions are natural: an invariant prior
 matches the hypothesis of a $G$-symmetric task,
  and volume-preservation is automatic for the, in practice very common, orthogonal actions.

\begin{restatable}[Equivariance of VI]{theorem}{restateequivariance}
\label{thm:equivariance-of-vi}
    \cref{ass:invariant-prior,ass:volume-preserving} imply the following for variational inference using  invariant likelihoods and closed exponential families:
    \begin{enumerate}[label=(\roman*)]
        \item \textbf{Loss invariance.} The loss $\mathcal{L}_{\beta}$ is invariant under the natural parameter transformation:
              \vspace{-0.5ex}
        \begin{equation}
            \mathcal{L}_{\beta}(\eta) = \mathcal{L}_{\beta}(\phi_{g}(\eta)), \quad \forall g \in G,\, \eta \in H.
        \end{equation}
        \item \textbf{Gradient equivariance and update commutativity.} The gradient of the loss $\mathcal{L}_{\beta}$ satisfies
              \vspace{-0.5ex}
        \begin{equation}
            \nabla_{\eta} \mathcal{L}_{\beta}(\phi_{g}(\eta)) = M_{g}^{-1} \nabla_{\eta} \mathcal{L}_{\beta}(\eta), \quad \forall g \in G,\, \eta \in H.
        \end{equation}
        Consequently, for $M_{g}$ orthogonal, gradient descent $\mathcal{U}(\eta) = \eta - \alpha \nabla_{\eta} \mathcal{L}_{\beta}(\eta)$ commutes with $\phi_{g}$:
              \vspace{-0.5ex}
        \begin{equation} \label{eq:update_commute}
            \mathcal{U}(\phi_{g}(\eta)) = \phi_{g}(\mathcal{U}(\eta)).
        \end{equation}
        \item \textbf{Invariant subspace is preserved.} For $M_{g}$ orthogonal, the equivariant subspace $H_{G}$
        is preserved by gradient updates: if $\eta^{(0)} \in H_{G}$, then $\eta^{(t)} \in H_{G}$ for all $t \geq 0$.
        \item \textbf{Optimal solutions inherit symmetry.} The set of optimal natural parameters $H^{\ast} = \arg\min_{\eta} \mathcal{L}_{\beta}(\eta)$ is closed under $\phi_{g}$ for all $g \in G$. If the optimum is unique, then $\phi_{g}(\eta^{\ast}) = \eta^{\ast}$.
    \end{enumerate}
\end{restatable}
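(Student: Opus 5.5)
We prove the four items in order, each building on the previous one. The central identity we aim for is $q_{\phi_g(\eta)} = \mathcal{T}_g\# q_\eta$, which \cref{thm:closure-under-push-forward} supplies for closed families. Everything then reduces to change-of-variables arguments.

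For (i), treat the two terms of $\mathcal{L}_\beta$ separately. For the data term, write $\mathbb{E}_{q_{\phi_g(\eta)}}[-\log L_{\mathrm{aug}}(\theta)] = \mathbb{E}_{\theta\sim q_\eta}[-\log L_{\mathrm{aug}}(\rho(g)\theta)]$, using the definition of the push-forward. Then apply \cref{prop:invariant-augmented-likelihood} with $g^{-1}$ in place of $g$ to replace $L_{\mathrm{aug}}(\rho(g)\theta)$ by $L_{\mathrm{aug}}(\theta)$; this shows $R_{\mathrm{aug}}$ is invariant.

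For the KL term, use that $\KL$ is invariant under a common measurable bijection of both arguments: $\KL(\mathcal{T}_g\#q\Vert\mathcal{T}_g\#p)=\KL(q\Vert p)$. Concretely, the Jacobian factors in the two push-forward densities cancel inside the log ratio, and the outer integral is handled by substituting $\theta=\rho(g)\theta'$. Combining this with \cref{ass:invariant-prior}, i.e.\ $\mathcal{T}_g\#p_0=p_0$, gives $\KL(q_{\phi_g(\eta)}\Vert p_0)=\KL(q_\eta\Vert p_0)$.

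\cref{ass:volume-preserving} is not strictly needed for this cancellation. It does make the densities transform simply, $\mathcal{T}_g\#q(\theta)=q(\rho(g)^{-1}\theta)$, and I would invoke it there to keep the computation clean. It is also consistent with taking $b_g$ and $c_g$ trivial in the setting of the later items.

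For (ii), differentiate the identity $\mathcal{L}_\beta(\phi_g(\eta))=\mathcal{L}_\beta(\eta)$ by the chain rule. The Jacobian of $\phi_g$ is $M_g^\top$, so $M_g\,\nabla\mathcal{L}_\beta(\phi_g(\eta))=\nabla\mathcal{L}_\beta(\eta)$. Inverting $M_g$ gives the claim; $M_g$ is invertible because $\phi_{g^{-1}}$ inverts $\phi_g$, so $M_{g^{-1}}^\top M_g^\top=I$. Now assume $M_g$ is orthogonal, so $M_g^{-1}=M_g^\top$. Then
\[
\mathcal{U}(\phi_g(\eta))=M_g^\top\eta+b_g-\alpha M_g^\top\nabla\mathcal{L}_\beta(\eta)=\phi_g(\mathcal{U}(\eta)).
\]
The step that needs care is that the $b_g$ term is untouched by the gradient step. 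This holds because $\phi_g$ is affine and the update is linear in the shift. Differentiability of $\mathcal{L}_\beta$ on the open set $H$ is taken as given, since the family is regular.

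For (iii), use induction on $t$. If $\phi_g(\eta^{(t)})=\eta^{(t)}$ for all $g$, then by (ii) we get $\phi_g(\eta^{(t+1)})=\mathcal{U}(\phi_g(\eta^{(t)}))=\mathcal{U}(\eta^{(t)})=\eta^{(t+1)}$. We also note that iterates are assumed to remain in $H$, which is the standing assumption for gradient descent to be defined.

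For (iv), suppose $\eta^\ast$ is a minimizer. By (i), $\mathcal{L}_\beta(\phi_g(\eta^\ast))=\mathcal{L}_\beta(\eta^\ast)$, and $\phi_g$ maps $H$ into $H$ because it is a group action on $H$ by \cref{thm:closure-under-push-forward}. Hence $\phi_g(\eta^\ast)\in H^\ast$. If the minimizer is unique, this forces $\phi_g(\eta^\ast)=\eta^\ast$.

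The main obstacle is the KL step in (i): stating the change of variables carefully so that the base-measure and Jacobian factors cancel. Everything after that is formal.
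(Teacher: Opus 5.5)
Your proposal is correct and follows the paper's proof item by item: change of variables through $q_{\phi_g(\eta)}=\mathcal{T}_g\#q_\eta$ for (i), the chain rule through the affine map $\phi_g$ for (ii), induction for (iii), and closure of the argmin for (iv). The only differences are refinements: you observe that the Jacobian factors cancel, so \cref{ass:volume-preserving} is not actually needed for (i), whereas the paper uses it to drop the determinant. You also justify invertibility of $M_g$ via $\phi_{g^{-1}}\circ\phi_g=\mathrm{id}$ on the open set $H$, while the paper appeals to minimality.
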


The proof is given in \cref{app:proof-of-thm-equivariance-of-vi}. This formalizes the intuition behind data augmentation: by making the training objective symmetric, augmentation ensures that $H_G$ is invariant -- and in the case of a unique minimum, even that the minimum is in $H_G$.

Note that Theorem~\ref{thm:equivariance-of-vi} does not rely on augmented data directly, only on an invariant likelihood (which is implied by data augmentation according to Proposition~\ref{prop:invariant-augmented-likelihood}). Therefore, the theorem is still applicable when the symmetry is present in the data but unknown.
However in such a case, it is hard to guarantee that the prior is invariant. The next theorem, which we prove in \cref{app:proof-of-thm-prior-independent-convergence}, shows that as the dataset size increases, the effect of a non-invariant prior vanishes.

\begin{restatable}[Prior-independent convergence]{theorem}{restateconvergence}
\label{thm:prior-independent-convergence}
Assume \cref{ass:volume-preserving}, and that the likelihood is invariant. Let $R^{\ast}_{\mathrm{aug}}:= \inf_{\eta}R_{\mathrm{aug}}(\eta)$ and $\eta^{\ast}_{\beta}(p_{0})\in\arg\min_{\eta}\mathcal{L}_{\beta}(\eta; p_{0})$.
We then have 
\begin{equation}
    R_{\mathrm{aug}}(\eta^{\ast}_{\beta}(p_{0})) 
    \leq R^{\ast}_{\mathrm{aug}} 
    + \beta \cdot C(p_{0}),
\end{equation}
for any prior $p_{0}$, where $C(p_{0}) := \inf_{\eta^{\ast} \in H^{\ast}_{R}} \KL(q_{\eta^{\ast}} \Vert p_{0})$ and $H^{\ast}_{R} := \arg\min_{\eta} R_{\mathrm{aug}}(\eta)$ is the set of risk-optimal natural parameters. As $\beta \to 0$, all minimizers achieve $R^{\ast}_{\mathrm{aug}}$ regardless of the prior.
\end{restatable}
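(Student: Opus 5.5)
The bound follows from a direct comparison argument: evaluate the objective $\mathcal{L}_\beta(\cdot;p_0)$ at its minimizer and at a cleverly chosen competitor, then use nonnegativity of the KL divergence. Invariance of the likelihood and \cref{ass:volume-preserving} play no essential role in the inequality itself. They only ensure that $R_{\mathrm{aug}}$, and hence the risk-optimal set $H^\ast_R$, are well defined in the symmetric setting (by \cref{thm:equivariance-of-vi}~(i) applied with $\beta=0$, $H^\ast_R$ is $\phi_g$-invariant), so that the infimum over $H^\ast_R$ is meaningful. I will remark on this but not rely on it.

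First, fix any $\eta^\ast\in H^\ast_R$, so that $R_{\mathrm{aug}}(\eta^\ast)=R^\ast_{\mathrm{aug}}$. By minimality of $\eta^\ast_\beta:=\eta^\ast_\beta(p_0)$,
\begin{equation*}
R_{\mathrm{aug}}(\eta^\ast_\beta)+\beta\KL(q_{\eta^\ast_\beta}\Vert p_0)=\mathcal{L}_\beta(\eta^\ast_\beta)\le \mathcal{L}_\beta(\eta^\ast)=R^\ast_{\mathrm{aug}}+\beta\KL(q_{\eta^\ast}\Vert p_0).
\end{equation*}
Second, since $\KL(q_{\eta^\ast_\beta}\Vert p_0)\ge 0$ and $\beta\ge0$, I drop that term on the left. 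This gives $R_{\mathrm{aug}}(\eta^\ast_\beta)\le R^\ast_{\mathrm{aug}}+\beta\KL(q_{\eta^\ast}\Vert p_0)$. Third, I take the infimum over $\eta^\ast\in H^\ast_R$ to obtain the bound with $C(p_0)$.

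For the limit statement, whenever $C(p_0)<\infty$ the bound together with $R_{\mathrm{aug}}(\eta^\ast_\beta)\ge R^\ast_{\mathrm{aug}}$ squeezes $R_{\mathrm{aug}}(\eta^\ast_\beta)\to R^\ast_{\mathrm{aug}}$ as $\beta\to0$. This holds uniformly over all minimizers and independently of which prior was used.

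The only genuine subtlety is the case where $H^\ast_R$ is empty, i.e.\ the infimum $R^\ast_{\mathrm{aug}}$ is not attained, or where $C(p_0)=\infty$. In the first case I would adopt the convention $\inf\emptyset=\infty$, making the bound vacuous. Alternatively, I would rerun the argument with an $\epsilon$-optimal competitor $\eta_\epsilon$ satisfying $R_{\mathrm{aug}}(\eta_\epsilon)\le R^\ast_{\mathrm{aug}}+\epsilon$. This yields $\limsup_{\beta\to0}R_{\mathrm{aug}}(\eta^\ast_\beta)\le R^\ast_{\mathrm{aug}}+\epsilon$ for every $\epsilon>0$, because each such $\eta_\epsilon$ has finite KL to $p_0$ within the regular exponential family, where KL between members is finite. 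This recovers the $\beta\to0$ claim even without attainment.
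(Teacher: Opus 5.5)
Your proposal is correct and is essentially the paper's proof: compare $\mathcal{L}_\beta$ at the minimizer with an arbitrary $\eta^\ast\in H^\ast_R$, drop the nonnegative KL term, take the infimum, and squeeze as $\beta\to0$ when $C(p_0)<\infty$. Like you, the paper notes the $G$-invariance of $R_{\mathrm{aug}}$ but never uses it in the inequality. One caveat on your $\epsilon$-optimal addendum: it claims $\KL(q_{\eta_\epsilon}\Vert p_0)<\infty$, which requires $p_0\in\mathcal{Q}$ (or some comparable condition), and the theorem's ``any prior $p_0$'' does not guarantee this.
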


Note that the quantity $C(p_{0})$ measures how well the prior aligns with the risk-optimal orbit. Since $\beta = 1/N$, with sufficient data, it becomes irrelevant. In contrast, $R_{\mathrm{aug}}$ is asymptotically constant; It will almost surely converge to the true likelihood $\mathbb{E}_{\substack{(x,y)\sim P_{\mathcal{X}}}}\mathbb{E}_{g\sim \nu_G}\mathbb{E}_{q_{\eta}}[-\log p((gx,gy) \mid \theta)]$.

Next, we address the fact that the true predictive maps $F_\eta$ in practice must be approximated with finite sample means. This will indeed cause a Monte Carlo equivariance defect $\widehat{\Delta}^{\mathrm{eq}}_{F}(\eta)$ (defined formally in~\eqref{equ:def-delta-hat}), but that defect vanishes with a number of samples grows, at the expected rate, as the following theorem shows (proof in \cref{app:proof-of-thm-complexity-for-G-e-equivariance}). We verify it and \cref{thm:prior-independent-convergence} in \cref{sec:validation-of-theorems}.

\begin{restatable}[Complexity for $G_{\varepsilon}$-equivariance]{theorem}{restatecomplexity}
\label{thm:complexity-for-G-e-equivariance}
 Suppose $f(x;\theta)$ is uniformly bounded in $x \in \mathcal{X}$ and 
    $q_{\eta}$-a.s.\ in $\theta$, and that the $G$-action on $\mathcal{Y}$ 
    is orthogonal. Then there exist constants $C_{\mathrm{data}}$ and $C_{\mathrm{weight}}$ such that, for any fixed $\eta$, with probability at least $1-\delta$ over the $N_0$ i.i.d.\ inputs $x_i \sim P_\mathcal{X}$ and the $T$ i.i.d.\ posterior samples $\theta^{(t)} \sim q_\eta$, the Monte Carlo equivariance defect satisfies
    \begin{equation}
        \widehat{\Delta}^{\mathrm{eq}}_{F}(\eta) \leq  \Delta^{\mathrm{eq}}_{F}(\eta)+ C_{\mathrm{data}}\sqrt{\frac{\log(2/\delta)}{N_{0}}} + C_{\mathrm{weight}}\sqrt{\frac{\log(2/\delta)}{T}}\,.
    \end{equation}
   In other words, for any $\varepsilon_{0},\varepsilon_{\mathrm{data}}, \varepsilon_{\mathrm{weight}} >0$, if $\Delta_F^{\mathrm{eq}}(\eta)<\varepsilon_0$, it suffices to choose
   \begin{equation}
        N_{0} \ge \frac{C_{\mathrm{data}}^{2}}{\varepsilon_{\mathrm{data}}^{2}}\log\left(\frac{2}{\delta}\right), \quad T \ge \frac{C_{\mathrm{weight}}^{2}}{\varepsilon_{\mathrm{weight}}^{2}}\log\left(\frac{2}{\delta}\right)
    \end{equation}
    in order for
    $\widehat{\Delta}_F^{\mathrm{eq}}(\eta) < \varepsilon_0 + \varepsilon_{\mathrm{data}}+\varepsilon_{\mathrm{weight}}$ with probability at least $1-\delta$.
\end{restatable}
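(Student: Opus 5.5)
We first need the form of $\widehat{\Delta}^{\mathrm{eq}}_{F}(\eta)$, which is only referenced here. We take it to be
\[
\widehat{\Delta}^{\mathrm{eq}}_{F}(\eta)=\frac{1}{N_0}\sum_{i=1}^{N_0}\frac{1}{|G|}\sum_{g\in G}\bigl\|\widehat F_\eta(gx_i)-g\widehat F_\eta(x_i)\bigr\|^2,
\qquad
\widehat F_\eta(x)=\frac1T\sum_{t=1}^T f(x;\theta^{(t)}).
\]
This is the empirical average over the data and the exact average over the finite group $G$.

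The plan is to introduce the intermediate quantity
\[
\widetilde\Delta(\eta):=\frac{1}{N_0}\sum_{i}\frac{1}{|G|}\sum_{g}\|F_\eta(gx_i)-gF_\eta(x_i)\|^2,
\]
which has the exact predictive map but empirical inputs. We then split
\[
\widehat\Delta-\Delta=(\widehat\Delta-\widetilde\Delta)+(\widetilde\Delta-\Delta)
\]
and bound each term with probability at least $1-\delta/2$. A union bound then gives the result with $\log(4/\delta)$, and the constants can be adjusted to write $\log(2/\delta)$. Throughout, let $B$ be the uniform bound on $\|f\|$. Then $\|F_\eta\|,\|\widehat F_\eta\|\le B$, and since $\rho_{\mathcal Y}(g)$ is orthogonal, every summand lies in $[0,4B^2]$.

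\textbf{Data term.} The function $\psi(x):=\frac1{|G|}\sum_g\|F_\eta(gx)-gF_\eta(x)\|^2$ is deterministic, since $\eta$ is fixed, and takes values in $[0,4B^2]$. Its mean under $P_{\mathcal X}$ is exactly $\Delta^{\mathrm{eq}}_F(\eta)$. Hoeffding's inequality over the $N_0$ i.i.d.\ inputs gives
\[
\widetilde\Delta-\Delta\le 4B^2\sqrt{\log(2/\delta)/(2N_0)},
\]
which fixes $C_{\mathrm{data}}$.

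\textbf{Weight term.} This term needs a bound on $\widehat\Delta-\widetilde\Delta$ that is uniform over the sample points, or at least holds conditionally on the $x_i$; the $\theta^{(t)}$ are independent of the $x_i$, so we may condition on the inputs. Write $u=\widehat F(gx)-g\widehat F(x)$ and $v=F(gx)-gF(x)$. Then
\[
\bigl|\|u\|^2-\|v\|^2\bigr|\le\|u-v\|\,\|u+v\|\le 4B\|u-v\|,
\]
and by orthogonality
\[
\|u-v\|\le\|\widehat F(gx)-F(gx)\|+\|\widehat F(x)-F(x)\|.
\]
Hence
\[
\widehat\Delta-\widetilde\Delta\le 8B\,\Phi,\qquad
\Phi:=\frac1{N_0}\sum_i\frac1{|G|}\sum_g\|\widehat F(gx_i)-F(gx_i)\|.
\]
Conditioned on the inputs, $\Phi$ is a function of the i.i.d.\ $\theta^{(1)},\dots,\theta^{(T)}$. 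Changing a single $\theta^{(t)}$ changes each $\widehat F$ by at most $2B/T$, so $\Phi$ has bounded differences $2B/T$. McDiarmid's inequality then gives
\[
\Phi\le\mathbb E\Phi+2B\sqrt{\log(2/\delta)/(2T)}.
\]
Jensen's inequality and the variance of a mean of $T$ i.i.d.\ bounded vectors give
\[
\mathbb E\|\widehat F(x)-F(x)\|\le B/\sqrt T.
\]
Since $\log(2/\delta)\ge\log 2$, this $B/\sqrt T$ is absorbed into a constant multiple of $\sqrt{\log(2/\delta)/T}$. This fixes $C_{\mathrm{weight}}$.

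The main obstacle is this weight term. The squared norm is not a sum of independent terms in $\theta^{(t)}$, so plain Hoeffding does not apply; the linearization plus McDiarmid step above is what handles it. If one prefers an unconditional statement, the same McDiarmid argument can be run jointly over $(x_i,\theta^{(t)})$, but conditioning is cleaner.

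The second claim follows directly. Substituting the given lower bounds on $N_0$ and $T$ makes each square-root term at most $\varepsilon_{\mathrm{data}}$ and $\varepsilon_{\mathrm{weight}}$ respectively, and $\Delta^{\mathrm{eq}}_F(\eta)<\varepsilon_0$ is assumed.
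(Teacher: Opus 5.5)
Your proof is correct and shares the paper's skeleton. Both use the intermediate quantity $\widetilde\Delta$ (exact predictive, empirical inputs), Hoeffding over the $N_0$ inputs on the range $[0,4B^2]$ (giving the same $C_{\mathrm{data}}=2\sqrt{2}B^2$), and McDiarmid over the $T$ posterior samples with the inputs fixed. Both finish with a union bound at level $\delta/2$ each and absorb a deterministic $1/\sqrt{T}$ term via $\log(2/\delta)\ge\log 2$.

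The difference is in the weight term. The paper splits $\widehat\Delta-\widetilde\Delta$ into two pieces:
\begin{itemize}
\item a bias $\mathbb{E}[\widehat\Delta]-\widetilde\Delta$, computed exactly via $\mathbb{E}\|u\|^2-\|v\|^2=\mathrm{tr}\,\mathrm{Cov}(u)\le 4B^2/T$ in your notation (using $\mathbb{E}u=v$);
\item a fluctuation, handled by applying McDiarmid directly to the quadratic functional $\widehat\Delta$.
\end{itemize}
There the identity $\|a\|^2-\|b\|^2=\langle a-b,a+b\rangle$ is used only to get the bounded-difference constant $16B^2/T$.

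You use the same identity to linearize around $F_\eta$ first. McDiarmid is then applied to an average of norms $\|\widehat F_\eta-F_\eta\|$, and the mean is handled by Jensen. Your route is slightly more elementary and controls $|\widehat\Delta-\widetilde\Delta|$ in both directions at once. The paper's route keeps the sharper fact that the systematic part of the Monte Carlo error is only $O(1/T)$, whereas $\mathbb{E}\|\widehat F_\eta-F_\eta\|$ is genuinely of order $1/\sqrt{T}$. For the theorem as stated, which only claims a $\sqrt{\log(2/\delta)/T}$ rate, the two are equivalent.

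There are two small corrections. First, after averaging over $g$, the term $\|\widehat F(x_i)-F(x_i)\|$ does not depend on $g$. What you actually get is $\widehat\Delta-\widetilde\Delta\le 4B(\Phi+\Phi')$ with $\Phi'=\frac{1}{N_0}\sum_i\|\widehat F(x_i)-F(x_i)\|$, and the inequality $\Phi'\le\Phi$ implicit in your $8B\Phi$ need not hold. Running McDiarmid on $\Phi+\Phi'$ (bounded differences $4B/T$, mean at most $2B/\sqrt{T}$) gives exactly your final bound $8B^2/\sqrt{T}+8\sqrt{2}B^2\sqrt{\log(2/\delta)/T}$, so nothing downstream changes. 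Second, each of your two bounds already holds with probability $1-\delta/2$ in terms of $\log(2/\delta)$. The union bound therefore gives $\log(2/\delta)$ directly, and there is no $\log(4/\delta)$ to adjust.
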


\subsection{Symmetrization of the variational posterior}
\label{sec:symmetrization-of-the-variational-posterior}

According to \cref{thm:equivariance-of-vi}(iii), the space of parameters corresponding to equivariant variational distributions $H_{G}$ is preserved during training. Hence, if we ensure that the parameters lie in $H_{G}$ at some point during training, we will arrive at an equivariant variational posterior at the end of it. Therefore, we propose two complementary mechanisms that place $\eta$ into $H_{G}$, one by projecting an existing posterior, one by constructing the posterior from a smaller base, see Figure~\ref{fig:figureone}.

\paragraph{Orbit averaging.} Given a posterior distribution $q_{\eta}$ at any point during training, we construct a $G$-invariant posterior by averaging its natural parameter over the orbit:
\begin{equation}
    \tilde{\eta} = \frac{1}{|G|} \sum_{g \in G} \phi_g(\eta).
    \label{equ:orbit-average}
\end{equation}
We can interpret this operation in two ways:
First, $\tilde{\eta}$ is the natural parameter of the \emph{geometric mean distribution} $\tilde{q}(\eta) \propto \left[\prod_{g \in G} \Big(\mathcal{T}_{g} \# q_{\eta}\Big)(\theta)\right]^{\frac{1}{|G|}}$ (which lies in $\mathcal{Q}$ by \cref{lmm:geometric-mean-invariance-and-membership} in \cref{app:geometric-averaging-stays-in-Q}). Secondly, for orthogonal $\phi_g$, $\tilde{\eta}$ is the orthogonal projection of $\eta$ onto the invariant subspace $H_{G}$ under the Euclidean inner product -- this in particular shows that $\tilde{\eta}\in H_G$.

The mechanism specializes through the choice of $\rho_{\Theta}$. Let us exemplify this in the case of $C_4$ acting on
convolutional layers: First, we can define a representation  acting by rotating each individual convolutional channel  and secondly a representation that additionally permutes channel blocks, inspired by group convolutional neural  (GCNNs) from \citet{cohenGroupEquivariantConvolutional2016}. The corresponding average operations on the filters are depicted in
Figure \ref{fig:symmetrization-panels-avg}. We will refer to the former as \textbf{geometric averaging}, and the latter as \textbf{projection}. We derive both in detail in \cref{app:symmetrization-strats}.

\begin{figure}
    \centering
    \includegraphics[width=1\linewidth, trim=0.4cm 1.7cm 0.6cm 1.49cm, clip]{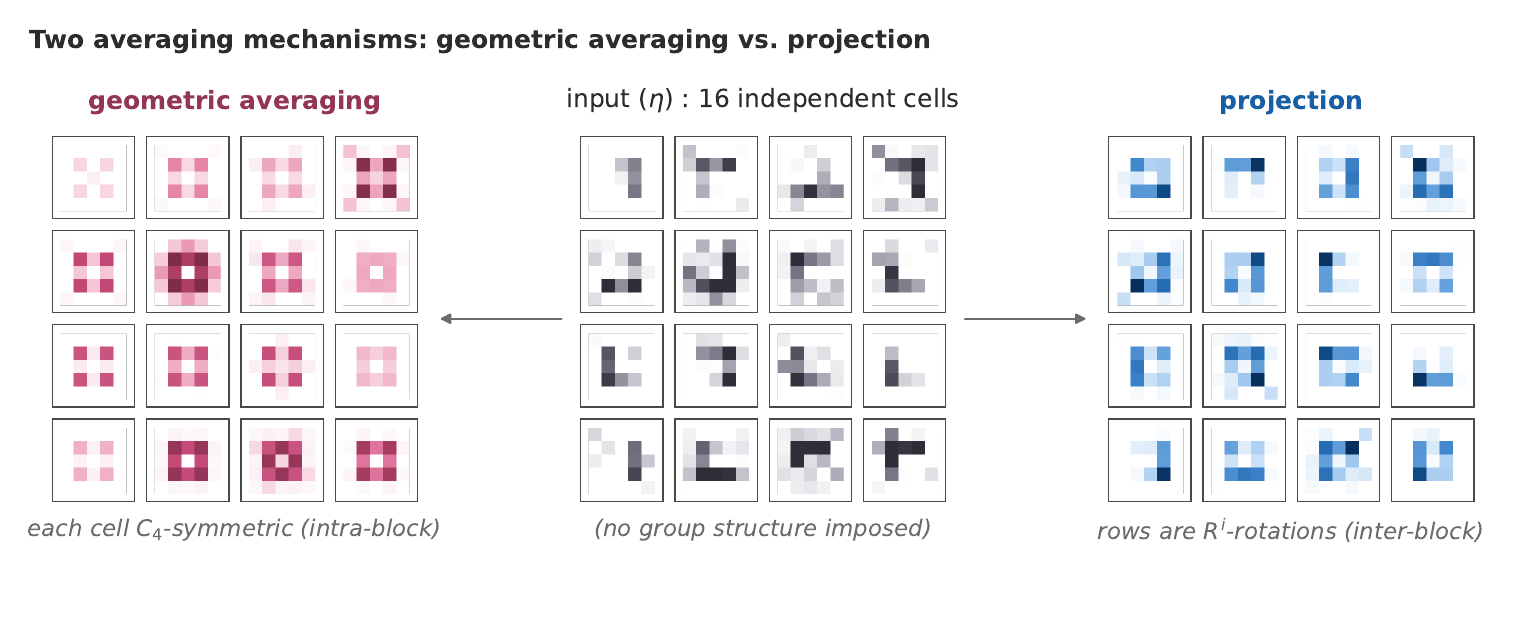}
    \caption{Two specializations of the orbit averaging \eqref{equ:orbit-average} on the second convolutional layer of a $C_{4}$-equivariant network. Both start from the same $4 \times 4$ block of variational parameters $\eta$.
    }
    \label{fig:symmetrization-panels-avg}
\end{figure}

\paragraph{Orbit expansion.} Instead of symmetrizing a general parameter of the intended size, we can also first train a small parameter and then, in a manner inspired by the weight-sharing structure of \\ GCNNs, extend it to an equivariant one of bigger size. Concretely, a base network with $1/|G|$ of the target channel width is trained in Stage 1, yielding a posterior $q_{\eta_{\mathrm{small}}}$. At the start of Stage 2, the network is expanded to full width by inserting rotated versions of each filter in $|G|\times |G|$-blocks (see \cref{fig:symmetrization-panels-gcnn}). We refer to this strategy as \textbf{orbit expansion}.
The resulting posterior $q_{\eta_{\mathrm{full}}}$ has a $G$-equivariant block structure at the parameter level. In \cref{app:expansion-along-orbit-filter-arrangement-ablation}, we compare three different expansion strategies.

\begin{figure}
    \centering
   \includegraphics[width=1\linewidth, trim=0.2cm 0.9cm 0.8cm 1.0cm, clip]{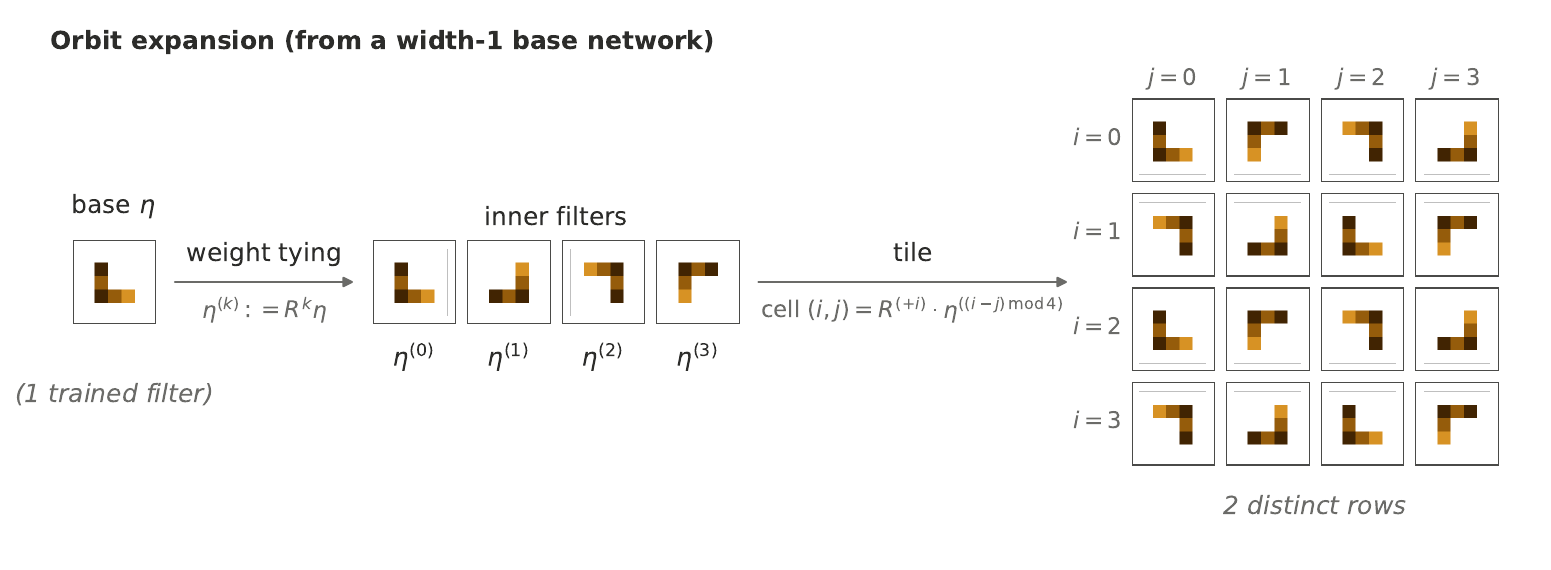}
    \caption{Filter arrangement for orbit expansion under $C_{4}$, starting from a single base filter $\eta$.}
    \label{fig:symmetrization-panels-gcnn}
\end{figure}

\section{Experiments}
In our experiments, we consider the cyclic group $C_4$ of rotations of multiples of $90^{\circ}$
acting on FashionMNIST \citep{xiao2017fashionmnistnovelimagedataset}. This setup allows us to augment exactly and 
enables extensive Monte Carlo sampling. Throughout our experiments, we use Gaussian variational distributions which satisfy the constraints of Theorem~\ref{thm:closure-under-push-forward}. In Appendix~\ref{app:variational-family-closure-ablation}, we provide additional ablations for other variational distributions. Code is provided at \href{https://github.com/dmw1998/augment-BNNs}{\texttt{github.com/dmw1998/augment-BNNs}}.
\subsection{Validation of Theorems}
\label{sec:validation-of-theorems}

We start by verifing the predictions of \cref{thm:prior-independent-convergence,thm:complexity-for-G-e-equivariance}. We use a Bayesian neural network with two convolutional layers and one linear layer and
vary the dataset size $N_{0} \in \{500,\,5000,\,20000,\,50000\}$. For \cref{thm:prior-independent-convergence}, we compare an invariant isotropic Gaussian prior $p_{0} = \mathcal{N}(0,I)$ against a non-invariant prior with means drawn from a normal distribution. Posterior predictives are estimated with $T$ Monte Carlo samples. The results are presented in \cref{fig:theorem-validation}.
\begin{figure}
    \centering
    \includegraphics[width=1\linewidth]{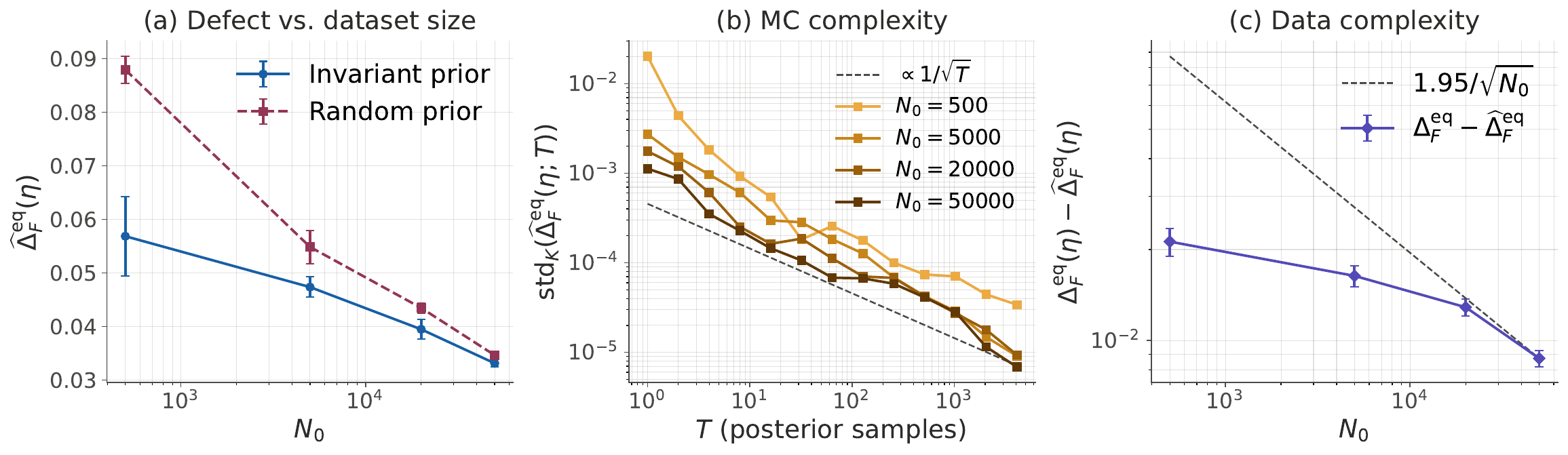}
    \caption{Empirical validation of \cref{thm:prior-independent-convergence,thm:complexity-for-G-e-equivariance}.
    \textbf{(a)} The empirical equivariance defect decreases with $N_{0}$ under both invariant and random Gaussian priors, and the two curves converge.
    \textbf{(b)} K-fold standard deviation of the Monte Carlo estimate $\widehat{\Delta}^{\mathrm{eq}}_F(\eta; T)$ across $K=10$ independent runs at each $T$. The decay matches the $\mathcal O(1/\sqrt T)$ rate predicted by \cref{thm:complexity-for-G-e-equivariance} (dotted reference line, slope $-0.5$).
    \textbf{(c)} The training gap decreases with $N_{0}$ in close agreement with the $1/\sqrt{N_{0}}$ rate predicted by \cref{thm:complexity-for-G-e-equivariance}. The dashed line is a fit with empirical constant $C_{\mathrm{data}} \approx 1.95$. Error bars are $\pm 1$ s.d. across 5 seeds.}
    \label{fig:theorem-validation}
\end{figure}

\paragraph{Priors.} The empirical defect $\widehat{\Delta}^{\mathrm{eq}}_{F}$ decreases monotonically with $N_{0}$ under both priors, from $0.057 \pm 0.007$ to $0.033 \pm 0.001$ (invariant) and $0.088 \pm 0.003$ to $0.035 \pm 0.001$ (random); the two curves converge as $N_{0}$ grows,
consistent with \cref{thm:prior-independent-convergence}.
\paragraph{Monte Carlo complexity.} \cref{thm:complexity-for-G-e-equivariance} predicts that the gap between the $T$-sample Monte Carlo estimate $\widehat{\Delta}^{\mathrm{eq}}_{F}(\eta;T)$ and the true predictive map decays as $\mathcal{O}(1/\sqrt{T})$. We assess this directly by drawing $K=10$ independent $T$-sample estimates at each $T$ and tracking their standard deviation across the $K$ runs. The results are shown in \cref{fig:theorem-validation}(b). The theorem is indeed verified: a $\log$-$\log$ fit of the $K$-fold std  against $T$
yields slopes of $-0.45$, $-0.66$, $-0.52$, $-0.51$ for $N_{0} \in \{500,\,5000,\,20000,\,50000\}$ respectively, all close to the predicted $-0.5$.

\paragraph{Data complexity.} In Figure \ref{fig:theorem-validation}(c), we use $T_{\max} = 1024$ as a proxy for $T \to \infty$ and plot the training gap $\Delta^{\mathrm{eq}}_{F}(\eta) - \widehat{\Delta}^{\mathrm{eq}}_{F}(\eta)$ together with $\mathcal{O}(1 / \sqrt{N_{0}})$ envelope from \cref{thm:complexity-for-G-e-equivariance}, with $C_{\mathrm{data}} \approx 1.95$. The measured gap decreases with $N_0$ in close agreement with the predicted $1/\sqrt{N_0}$ behavior.

\subsection{Symmetrization on image classification}
\label{sec:symmetrization-on-image-classification}

Next, we empirically compare the symmetrization mechanisms from \cref{sec:symmetrization-of-the-variational-posterior}.

\paragraph{Setup.} We train a convolutional Bayesian neural network with two convolutional layers with 32 and 64 channels, followed by a classification layer using a diagonal Gaussian variational family and a standard isotropic Gaussian prior. The training set consists of 5\,000 randomly chosen FashionMNIST images, each augmented with the full $C_{4}$ orbit (producing 20\,000 training samples). 

The parameter space of each convolutional layer decomposes along the channel dimension. We implement the orbit-averaging mechanism of \cref{sec:symmetrization-of-the-variational-posterior} under two natural choices of $\rho_{\Theta}$ introduced there: \textbf{geometric averaging} (per-filter $C_{4}$-symmetrization, channel arrangement unconstrained) and \textbf{projection} (averaging across filters and channel-block permutation). Both satisfy the closure conditions of \cref{thm:closure-under-push-forward}, so the results of \cref{sec:theory} apply uniformly. \textbf{Orbit expansion} is implemented with the GCNN-expansion from Figure \ref{fig:symmetrization-panels-gcnn}.
All methods share the same architecture, prior, optimizer, learning rate, epoch count, and number of MC samples; they differ only in when and how the posterior is symmetrized. We use SGD as the optimizer throughout the main text: its update rule is linear in the gradient and therefore commutes exactly with $\phi_{g}$ (\cref{thm:equivariance-of-vi}(ii)), so the equivariant subspace $H_{G}$ is preserved under training.

Each symmetrization mechanism is applied at epoch $n \in \{0, 20, 100\}$ to study the effect of trigger timing (\cref{sec:symmetrization-of-the-variational-posterior}). After the intervention, all methods continue training until epoch 500. As a baseline, we train a randomly initalized network on augmented data.
\paragraph{Metrics.} We report: (1) classification accuracy (\%), (2) orbit same prediction (OSP): the fraction of predictions that agree with the mode on the orbit, averaged over the test set, and (3) symmetric KL divergence between predictive distributions on original and rotated inputs.

\begin{table}[t]
    \centering
    \small
    \setlength{\tabcolsep}{12pt}
    \begin{tabular}{ll ccc}
    \toprule
    \textbf{Method} & \textbf{Epoch}
      & \textbf{Acc.\,(\%)} $\uparrow$
      & \textbf{OSP} $\uparrow$
      & \textbf{Sym.\,KL} $\downarrow$ \\
    \midrule
    Baseline & ---
      & $79.0{\scriptstyle\pm0.6}$
      & $0.927{\scriptstyle\pm.003}$
      & $0.0217{\scriptstyle\pm.0028}$ \\
    \midrule
    \multirow{3}{*}{Geo. avg.}
      & 0   & $78.6{\scriptstyle\pm0.4}$ & $0.950{\scriptstyle\pm.003}$ & $0.0094{\scriptstyle\pm.0009}$ \\
      & 20  & $78.6{\scriptstyle\pm0.8}$ & $0.952{\scriptstyle\pm.005}$ & $0.0094{\scriptstyle\pm.0020}$ \\
      & 100 & $78.0{\scriptstyle\pm1.1}$ & $0.946{\scriptstyle\pm.005}$ & $0.0112{\scriptstyle\pm.0020}$ \\
    \midrule
    \multirow{3}{*}{Proj.}
      & 0   & $78.4{\scriptstyle\pm0.8}$ & $0.931{\scriptstyle\pm.004}$ & $0.0174{\scriptstyle\pm.0026}$ \\
      & 20  & $78.5{\scriptstyle\pm0.2}$ & $0.931{\scriptstyle\pm.004}$ & $0.0180{\scriptstyle\pm.0026}$ \\
      & 100 & $77.2{\scriptstyle\pm0.9}$ & $0.929{\scriptstyle\pm.012}$ & $0.0182{\scriptstyle\pm.0046}$ \\
    \midrule
    \multirow{3}{*}{Orbit exp.}
      & 0   & $80.4{\scriptstyle\pm0.4}$ & $0.963{\scriptstyle\pm.003}$ & $0.0057{\scriptstyle\pm.0004}$ \\
      & 20 & $\mathbf{80.4}{\scriptstyle\pm0.6}$ & $0.966{\scriptstyle\pm.004}$ & $0.0048{\scriptstyle\pm.0009}$ \\
      & \cellcolor{gray!15}100 & \cellcolor{gray!15}$80.0{\scriptstyle\pm0.5}$ & \cellcolor{gray!15}$\mathbf{0.968}{\scriptstyle\pm.003}$ & \cellcolor{gray!15}$\mathbf{0.0042}{\scriptstyle\pm.0007}$ \\
    \bottomrule
    \end{tabular}
    \caption{Effect of symmetrization mechanism and timing on FashionMNIST
    ($N_{0} = 5\,000$, $C_4$ augmentation, $10\,000$ test samples), all with SGD.
    Results are mean $\pm$ std over 5 seeds.
    \textbf{Bold}: best in each column.
    \colorbox{gray!15}{Shaded}: best equivariance--accuracy trade-off.
    Full results across optimizers and trigger epochs are in \cref{tab:trigger-timing-full}.}
    \label{tab:symmetrization-comparison}
\end{table}
\paragraph{Results.} Our results are summarized in \cref{tab:symmetrization-comparison}.
First note  that none of our strategies yields perfect equivariance, as expected from the 
Monte Carlo residual quantified by \cref{thm:complexity-for-G-e-equivariance}, present even when $\eta \in H_G$. However, all strategies yield more equivariant models, and the orbit expansion model leads to a better overall performance.

We note that the performance across all metrics and strategies generally decreases when the intervention is applied later. This is natural, since a later trigger time exactly corresponds to a shorter training time post re-initalization. OSP and Sym.KL of the geometric average are the only exceptions to this trend, but this decrease is from much better values
compared to all other methods. 

The orbit expansion yields the best results. We hypothesize that the reason for this is that the orbit expansion results in parameters with additional symmetries compared to the orbit averaging techniques, due to the restricted number of base filters (cf.\ Figures~\ref{fig:symmetrization-panels-gcnn} and~\ref{fig:orbit-expansion-arrangements}).
These additional symmetries seem to induce more stability. To further explore such effects however remains future work.

In \cref{app:trigger-timing-ablation} we repeat the comparison with AdamW, whose adaptive rescaling violates the update commutation \eqref{eq:update_commute}; there an earlier trigger \emph{hurts} equivariance, so $H_{G}$ is less stable. This is further evidence for the correctness of our theory.

\section{Conclusion and limitations} In this paper, we theoretically analyzed the effect of data augmentation for variational inference of Bayesian neural networks. We showed that (under some weak conditions) when the underlying exponential family of variational distributions is closed under a symmetry and an invariant prior is used, a set $H_G$ of symmetric parameters corresponding to equivariant predictive maps becomes invariant under gradient training on augmented data. Based on this theorem, we proposed and tested several strategies to initialize the net in $H_G$. The orbit insertion strategy proved especially efficient, boosting both the performance and the equivariance compared to a baseline. 

Throughout our work, we have made a number of assumptions that limit the scope of our results. In particular, we focused on exponential families. Although this is a broad class of probability distributions, some notable distributions are not part of it, such as Gaussian mixtures. Furthermore, we have restricted our analysis (and correspondingly our experiments) to finite groups to enable a straightforward definition of the (finite) augmented dataset. An extension to continuous groups would require an augmentation approach based on sampling from the symmetry group. Although this is outside the scope of this work, we do not see a conceptual obstacle to extending our results in this way (see also the discussion in Appendix \ref{app:continuous-compact-groups}). A final limitation is the compatibility assumption~\eqref{eq:compatibility}. Given the flexibility of choice of intermediate representations, this restriction is however mild. 

\section*{Acknowledgments} This work was partially supported by the Wallenberg AI, Autonomous Systems and Software Program (WASP) funded by the Knut and Alice Wallenberg Foundation. The computations were enabled by resources provided by the National Academic Infrastructure for Supercomputing in Sweden (NAISS), partially funded by the Swedish Research Council through grant agreement no. 2025/22-1341.

\vspace*{-1ex}
\renewcommand*{\bibfont}{\normalfont\footnotesize}
\printbibliography

\ifbool{includeapp}{

\newpage
\appendix
\onecolumn

\section{From discrete to continuous compact groups}
\label[app]{app:continuous-compact-groups}

It is often the case in the geometric deep learning literature that virtually all results concerning finite groups can be generalized to a compact, continuous group -- simply by replacing the average over the group with an integral over the group with respect to the \emph{Haar measure} $\nu_G$
\begin{align*}
    \frac{1}{|G|}\sum_{g \in G} \quad \longrightarrow  \quad \int_G \mathrm{d}\nu_G(g).
\end{align*}
The Haar measure is the unique $G$-invariant probability measure on $G$ -- meaning that if $g\sim \nu_G$, also $hg\sim \nu_G$ for any $h\in G$. Note that for finite groups, the average over $G$ already is the Haar average.

In order for us to do this, there is one minor obstruction: the definition of the augmented likelihood. Note that for a finite group, this is well-defined:
\begin{align*}
    p(\mathcal{D}_{\mathrm{aug}} \, \vert \theta) = \prod_{(x_i,y_i)\in \mathcal{D}_{\mathrm{aug}}} p((x_i,y_i) \, \vert \theta) =\prod_{g\in G}\prod_{(x_i,y_i)\in \mathcal{D}}p((gx_i,gy_i) \, \vert \theta).
\end{align*}
For an infinite group, the product above (as well as the augmented dataset) becomes infinite, and hard to interpret. 

We can circumvent this by considering a sampled  augmentation -- we draw $S$ i.i.d. elements $g_j\in G$ (according to $\nu_G$) and form a randomly augmented dataset $\mathcal{D}_{S-\mathrm{aug}}=\{(g_jx_i,g_jy_i)\}_{i=1,j=1}^{N_0,S}$. The likelihood of this data set is
\begin{align*}
     p(\mathcal{D}_{S_\mathrm{aug}} \, \vert \theta) = \prod_{(x_i,y_i)\in \mathcal{D}_{S-\mathrm{aug}}} p((x_i,y_i) \, \vert \theta) =\prod_{j=1}^S\prod_{(x_i,y_i)\in \mathcal{D}}p((g_jx_i,g_jy_i) \, \vert \theta).
\end{align*}
Now note that we can maximize this by minimizing the negative log-likelihood, a rescaled version of which becomes
\begin{align*}
    -\frac{1}{S}\log p(\mathcal{D}_{\mathrm{aug}} \, \vert \theta) = - \frac{1}{S}\sum_{j=1}^S\sum_{(x_i,y_i)\in \mathcal{D}} \log p((gx_i,gy_i) \, \vert \theta)\,.
\end{align*}
As we increase the number of samples, this function converges towards
\begin{align*}
    -\int_G \sum_{(x_i,y_i)\in \mathcal{D}} \log p((gx_i,gy_i) \, \vert \theta) \mathrm{d}\nu_G(g)\,.
\end{align*}
If we define this function as $-\log p(\mathcal{D}_{\mathrm{aug}}\vert \theta)$ in the infinite case, all of our results go through, with proofs more or less verbatim.

It should be noted that in practice we will always need to work on a sampled augmentation space, so there will be a discretization error with respect to the group. Also note that in this formulation, we draw the sample once and then train on the resulting dataset -- we are not drawing new random group elements in each epoch, which often is done. To analyze this latter stochastic version requires other tools than we have used here, and we leave it to future work.

\section{Proof of \cref{prop:invariant-augmented-likelihood}}
\label[app]{app:proof-of-prop-invariant-augmented-likelihood}

\restateinvariantlikelihood*

\begin{proof}
The  compatibility of $\rho_{\Theta}$ \eqref{eq:compatibility} with the data representations implies that for any $g \in G$ and any input $x \in \mathcal{X}$,
    \begin{equation}
        f(gx;\theta) = g f(x;  \rho(g) ^{-1}\theta),
    \end{equation}
     implies
    \begin{align}
        p((gx,gy) \, \vert \, \theta) &= p(gy \, \vert \, gx, \theta)p(gx) = p(gy \, \vert \, f(gx;\theta))p(gx) \\
        &= p(gy \, \vert \, gf(x; \rho(g)^{-1}\theta))p(gx)
        = p(y \, \vert \, f(x; \rho(g)^{-1}\theta))p(gx),
    \end{align}
    or equivalently
    \begin{align} \label{eq:gshift}
        p((x,y) \, | \, \rho(g)\theta) = p(g^{-1}y \, \vert \, f(g^{-1}x;\theta))p(x),
    \end{align}

    We can now make the calculation
    \begin{align}
        L_{\mathrm{aug}}(\rho(g)\theta) &= \prod_{(x,y) \in \mathcal{D}_{\mathrm{aug}}} p((x, y) \mid \rho(g) \theta) = \prod_{(x,y) \in \mathcal{D}_{\mathrm{aug}}} p(g^{-1}y \, \vert \, f(g^{-1}x;\theta))p(x) \\
        &= \prod_{(x,y) \in \mathcal{D}_{\mathrm{aug}}}  p(g^{-1}y \, \vert \, f(g^{-1}x;\theta)) \cdot \prod_{(x,y)\in \mathcal{D}_{\mathrm{aug}}} p(x) \\
        &\stackrel{*}{=} \prod_{(x,y) \in \mathcal{D}_{\mathrm{aug}}}  p(y \, \vert \, f(x;\theta)) \cdot \prod_{(x,y)\in \mathcal{D}_{\mathrm{aug}}} p(x) \\
        &= \prod_{(x,y) \in \mathcal{D}_{\mathrm{aug}}}  p((x,y) \, \vert \, \theta) = L_{\mathrm{aug}}(\theta),
    \end{align}
    where (*) follows from the fact that $\mathcal{D}_{\mathrm{aug}}$ is closed under group transformations: $h\mathcal{D}_{\mathrm{aug}}=\mathcal{D}_{\mathrm{aug}}$ for any $h\in G$.
 \end{proof}

\section{Proof of \cref{thm:closure-under-push-forward}}
\label[app]{app:proof-of-thm-closure-under-push-forward}

\restateclosure*

\begin{proof}
    ``$\Leftarrow$'' This is a simple calculation \begin{align}
        (T_{g} \# q_{\eta})(\theta) &= h(\rho(g)^{-1}(\theta)) \exp(\eta^\top T(\rho(g)^{-1}(\theta)) - A(\eta)) \left|\det D(\rho(g)^{-1})(\theta)\right| \\
        &= h(\theta) \exp(b_{g}^\top T(\theta) + c_{g}) \exp(\eta^\top (M_{g} T(\theta) + d_{g}) - A(\eta)) \\
        &= h(\theta) \exp((M_{g}^\top \eta + b_{g})^\top T(\theta) - (A(\eta) - d_{g}^\top \eta - c_{g})) \\
        &=: h(\theta) \exp(\eta_{g}^\top T(\theta) - A_{g}(\eta))
    \end{align}
    It remains to verify that $A_{g}(\eta) = A(\eta) - d_{g}^\top\eta - c_{g}$ is indeed the log-partition function $A$ evaluated at $\eta_{g}$. 
    
    Using $\eta_{g} = M_{g}^\top\eta + b_{g}$, the affine condition $M_{g}T(\theta) = T(\rho(g)^{-1}\theta) - d_{g}$, and the base-measure condition $h(\theta)\exp(b_{g}^\top T(\theta)) = h(\rho(g)^{-1}\theta)\,|\det D(\rho(g)^{-1})(\theta)|\,e^{-c_{g}}$,
    \begin{align}
        A(\eta_{g})
        &= \log\int h(\theta)\exp\!\big(\eta^\top M_{g}T(\theta) + b_{g}^\top T(\theta)\big)\,d\theta \\
        &= \log\int h(\rho(g)^{-1}\theta)\,e^{-c_{g}}\,\big|\det D(\rho(g)^{-1})(\theta)\big|
           \exp\!\big(\eta^\top (T(\rho(g)^{-1}\theta) - d_{g})\big)\,d\theta \\
        &= -c_{g} - \eta^\top d_{g}
           + \log\int h(\rho(g)^{-1}\theta)\exp\!\big(\eta^\top T(\rho(g)^{-1}\theta)\big)
           \big|\det D(\rho(g)^{-1})(\theta)\big|\,d\theta \\
        &= -c_{g} - \eta^\top d_{g}
           + \log\int h(\theta')\exp\!\big(\eta^\top T(\theta')\big)\,d\theta'
           = A(\eta) - d_{g}^\top\eta - c_{g},
    \end{align}
    where the second-to-last step substitutes $\theta' = \rho(g)^{-1}\theta$, for which $|\det D(\rho(g)^{-1})(\theta)|\,d\theta = d\theta'$. Hence $A_{g}(\eta) = A(\eta_{g})$ and $\mathcal{T}_{g}\# q_{\eta} = q_{\eta_{g}}\in\mathcal{Q}$.

    ``$\Rightarrow$'' Suppose that there exists a mapping $\phi_{g}: H \to H$ such that for any $\eta \in H$ and $\theta \in \Theta$,
    \begin{equation}
        q_{\eta}(\rho(g)^{-1}(\theta)) \left|\det D(\rho(g)^{-1})(\theta)\right| = q_{\phi_{g}(\eta)}(\theta).
    \end{equation}
    
    First, we take the logarithm on both sides and use the form of $q_\eta$:
    \begin{align}
        &&&\log h(\rho(g)^{-1}(\theta)) + \eta^\top T(\rho(g)^{-1}(\theta)) - A(\eta) + \log \left|\det D(\rho(g)^{-1})(\theta)\right| \notag \\
        &&& \qquad =\phi_{g}(\eta)^\top T(\theta) - A(\phi_{g}(\eta)) + \log h(\theta) \\
        &\Rightarrow&& \eta^\top T(\rho(g)^{-1}(\theta)) + \log (h(\rho(g)^{-1}(\theta)) \left|\det D(\rho(g)^{-1})(\theta)\right|) + A(\phi_{g}(\eta)) - A(\eta) \notag \\
        &&& \qquad =\phi_{g}(\eta)^\top T(\theta) + \log h(\theta)\,. \label{equ:ast}
    \end{align}
    
    Take any $\eta_{1}, \eta_{2} \in H$, and subtract the equation \eqref{equ:ast} for $\eta_{2}$ from that for $\eta_{1}$:
    \begin{align}
        &(\eta_{1} - \eta_{2})^\top T(\rho(g)^{-1}(\theta)) - (\phi_{g}(\eta_{1}) - \phi_{g}(\eta_{2}))^\top T(\theta) \notag \\
        & \ =  A(\phi_{g}(\eta_{1})) - A(\eta_{1}) - A(\phi_{g}(\eta_{2})) + A(\eta_{2})\,.
    \end{align}

    Observe that the right-hand side is independent of $\theta$. Since the exponential family is minimal, $T(\theta)$ is linearly independent. To be precise, the components $\{1, T_{1}(\theta), \ldots, T_{k}(\theta)\}$ are linearly independent. Thus, both sides must be equal to some constant $C(\eta)$ independent of $\theta$. 

    Denote $u = \eta_{1} - \eta_{2}$ and $v = \phi_{g}(\eta_{1}) - \phi_{g}(\eta_{2})$. We have:
    \begin{equation}
        u^\top T(\rho(g)^{-1}(\theta)) - v^\top T(\theta) = C(\eta)\,.
    \end{equation}
    Since $\eta_{1}, \eta_{2}$ are arbitrary, $u$ can take any value in $\mathbb{R}^{k}$. Thus, we can choose $u$ to be each standard basis vector in $\mathbb{R}^{k}$. For each $j = 1, 2, \ldots, k$, we then get
    \begin{equation}
        T_{j}(\rho(g)^{-1}(\theta)) = \sum_{i=1}^{k} v_{i}^{(j)} T_{i}(\theta) + C_{j}\,.
    \end{equation}
    Denoting $M_{g} = [v^{(1)}, v^{(2)}, \ldots, v^{(k)}]^\top$ and $d_{g} = (C_{1}, C_{2}, \ldots, C_{k})^\top$, we conclude:
    \begin{equation}
        T(\rho(g)^{-1}(\theta)) = M_{g} T(\theta) + d_{g}\,.
    \end{equation}

    Next, we substitute this back into  \eqref{equ:ast}:
    \begin{align}
        &&&\eta^\top (M_{g} T(\theta) + d_{g}) + \log (h(\rho(g)^{-1}(\theta)) \left|\det D(\rho(g)^{-1})(\theta)\right|) - A(\eta) \notag \\
        &&& \qquad = \log h(\theta) + \phi_{g}(\eta)^\top T(\theta) - A(\phi_{g}(\eta)) \\
        &\Rightarrow&& (\eta^\top M_{g} - \phi_{g}(\eta)^\top) T(\theta) + \log (h(\rho(g)^{-1}(\theta)) \left|\det D(\rho(g)^{-1})(\theta)\right|) - \log h(\theta) + \eta^\top d_{g}  \notag \\
        &&& \qquad = A(\eta) - A(\phi_{g}(\eta))\,. \label{equ:astast}
    \end{align}

    Again, since the right-hand side is independent of $\theta$ and only depends on $\eta$, we take any two $\theta_{1}, \theta_{2} \in \Theta$ and subtract  \eqref{equ:astast} for $\theta_{2}$ from that for $\theta_{1}$ to obtain:
    \begin{equation}
        (\eta^\top M_{g} - \phi_{g}(\eta)^\top) (T(\theta_{1}) - T(\theta_{2})) + \log \frac{h(\rho(g)^{-1}(\theta_{1})) \left|\det D(\rho(g)^{-1})(\theta_{1})\right|}{h(\rho(g)^{-1}(\theta_{2})) \left|\det D(\rho(g)^{-1})(\theta_{2})\right|} - \log \frac{h(\theta_{1})}{h(\theta_{2})} = 0.
    \end{equation}
    Since the logarithmic terms do not depend on $\eta$, we can write this as
    \begin{equation}
        (\eta^\top M_{g} - \phi_{g}(\eta)^\top) (T(\theta_{1}) - T(\theta_{2})) = B(\theta_{1}, \theta_{2})\,,
    \end{equation}
    for some numbers $B(\theta_1,\theta_2)$. Now, again, since the family is minimal, the components $T(\theta)$ are linearly independent. This implies that we can choose a value for $\theta_1$ and $k$ different values for $\theta_2$, such that the matrix $(T(\theta_{1}) - T(\theta_{2}^\ell))_{\ell=1}^k\in \mathbb{R}^{k,k}$ becomes invertible. This yields a linear system of equations in the vector $\eta^\top M_{g} - \phi_{g}(\eta)^\top$, which has a unique solution which we denote by $-b_g(\theta)^{\top}$:
    \begin{equation}
        -b_{g}^\top(\theta) := \eta^\top M_{g} - \phi_{g}(\eta)^\top\,. \label{equ:triangle}
    \end{equation}
    Since the right hand side above is independent of $\theta$, the same must be true for $b_g$, so that we get
    \begin{equation}
        \phi_{g}(\eta) = M_{g}^\top \eta + b_{g}\,.
    \end{equation}

    Now plug \eqref{equ:triangle} back into  \eqref{equ:astast} to obtain
    \begin{equation}
        -b_{g}^\top T(\theta) + \log (h(\rho(g)^{-1}(\theta)) \left|\det D(\rho(g)^{-1})(\theta)\right|) - \log h(\theta) = A(\eta) - A(\phi_{g}(\eta)) - \eta^\top d_{g}\,.\label{eq:a-a-transf}
    \end{equation}

    Since the right-hand side is independent of $\theta$, the left-hand side must also be a constant independent of $\theta$. Since the left-hand side is independent of $\eta$, this constant is also independent of $\eta$. Denoting the constant by $c_{g}$, we have:
    \begin{align}
        &&\log (h(\rho(g)^{-1}(\theta)) \left|\det D(\rho(g)^{-1})(\theta)\right|) &= \log h(\theta) + b_{g}^\top T(\theta) + c_{g} \notag \\
        &\Longrightarrow& h(\rho(g)^{-1}(\theta)) \left|\det D(\rho(g)^{-1})(\theta)\right| &= h(\theta) \exp(b_{g}^\top T(\theta) + c_{g})
    \end{align}

    Finally, since the bracketed combination in \eqref{eq:a-a-transf} equals the constant $c_{g}$ established above, combining \eqref{eq:a-a-transf} with \eqref{equ:triangle} gives
    \begin{equation}
        A(\phi_{g}(\eta)) = A(\eta) - d_{g}^\top \eta - c_{g}.
    \end{equation}

    It remains to show that $g \mapsto \phi_{g}$ is an affine action of $G$ on $H$. Each $\phi_{g}(\eta) = M_{g}^\top\eta + b_{g}$ is affine, and $M_{g}$ is invertible: by minimality the rows $\{v^{(j)}\}$ chosen above are linearly independent, so $M_{g} = [v^{(1)},\dots,v^{(k)}]^\top$ is nonsingular.
    
    For the action property, the identity $e \in G$ has $\rho(e) = \mathrm{id}$, so
    \begin{equation}
        \mathcal{T}_{e} \# q_{\eta}(\theta) = q_{\eta}(\rho(e)^{-1}\theta) \left|\det D(\rho(e)^{-1})(\theta)\right| = q_{\eta}(\theta),
    \end{equation}
    giving $\phi_{e} = \mathrm{id}$. For $g_{1}, g_{2} \in G$, the chain rule yields
    \begin{align}
        \mathcal{T}_{g_{1} g_{2}} \# q_{\eta}(\theta)
        &= q_{\eta}\big(\rho(g_{2}^{-1})\rho(g_{1}^{-1})\theta\big)
           \big|\det D(\rho(g_{2}^{-1}))(\rho(g_{1}^{-1})\theta)\big|\,
           \big|\det D(\rho(g_{1}^{-1}))(\theta)\big| \notag \\
        &= q_{\phi_{g_{2}}(\eta)}\big(\rho(g_{1}^{-1})\theta\big)
           \big|\det D(\rho(g_{1}^{-1}))(\theta)\big|
        = q_{\phi_{g_{1}}(\phi_{g_{2}}(\eta))}(\theta),
    \end{align}
    Thus, we have $\phi_{g_{1} g_{2}}(\eta) = \phi_{g_{1}}(\phi_{g_{2}}(\eta))$. Since $\eta$ is arbitrary, we conclude that $\phi_{g_{1} g_{2}} = \phi_{g_{1}} \circ \phi_{g_{2}}$. Hence $g \mapsto \phi_{g}$ is an affine group action of $G$ on $H$, which completes the proof.
\end{proof}

\section{Proof of \cref{thm:equivariance-of-vi}}
\label[app]{app:proof-of-thm-equivariance-of-vi}

\restateequivariance*

\begin{proof}
    We prove the four claims in turn.
    \paragraph{(i) Loss invariance.} \cref{ass:invariant-prior}  implies that $q_{\eta_0}=q_{\phi_g(\eta_0)}$. Consequently, 
    \begin{align}
        \KL(q_{\phi_{g}(\eta)}(\theta) \Vert q_{\eta_{0}}(\theta)) &= \KL(q_{\phi_{g}(\eta)}(\theta) \Vert q_{\phi_{g}(\eta_{0})}(\theta)) \\ &= \KL(q_{\eta}(\rho(g)^{-1} \theta) \Vert  q_{\eta_{0}}(\rho(g)^{-1} \theta)) \\
        &= \int q_{\eta}(\rho(g)^{-1} \theta) \log \frac{q_{\eta}(\rho(g)^{-1} \theta)}{q_{\eta_{0}}(\rho(g)^{-1} \theta)} d(\rho(g)^{-1} \theta) \\
        &= \int q_{\eta}(\theta) \log \frac{q_{\eta}(\theta)}{q_{\eta_{0}}(\theta)} |\det D(\rho(g)^{-1})(\theta)| d\theta \\
        &= \int q_{\eta}(\theta) \log \frac{q_{\eta}(\theta)}{q_{\eta_{0}}(\theta)} d\theta = \KL(q_{\eta}(\theta) \Vert  q_{\eta_{0}}(\theta)),
    \end{align}
    where the third step is the transformation formula and the fourth uses the volume-preserving condition $|\det D(\rho(g)^{-1})(\theta)|=1$ (\cref{ass:volume-preserving}).

    We now use the invariance of the likelihood proven in  \cref{prop:invariant-augmented-likelihood} to get
    \begin{align}
        \mathbb{E}_{q_{\phi_{g}(\eta)}(\theta)}[\log p(\mathcal{D}_{\mathrm{aug}} \mid \theta)] &= \int q_{\phi_{g}(\eta)}(\theta) \log p(\mathcal{D}_{\mathrm{aug}} \mid \theta) d\theta \\
        &= \int q_{\eta}(\rho(g)^{-1} \theta) \log p(\mathcal{D}_{\mathrm{aug}} \mid \theta) d\theta \\
        &= \int q_{\eta}(\theta) \log p(\mathcal{D}_{\mathrm{aug}} \mid \rho(g)(\theta)) |\det D(\rho(g))(\theta)| d\theta \\
        &= \int q_{\eta}(\theta) \log p(\mathcal{D}_{\mathrm{aug}} \mid \theta) d\theta = \mathbb{E}_{q_{\eta}(\theta)}[\log p(\mathcal{D}_{\mathrm{aug}} \mid \theta)].
    \end{align}
    We again used the transformation formula and the volume-preserving conditions. Combining the two terms, we get
    \begin{equation}
        \mathcal{L}_{\beta}(\phi_{g}(\eta)) = \tfrac{1}{N} \mathbb{E}_{q_{\eta}(\theta)}[-\log p(\mathcal{D}_{\mathrm{aug}} \mid \theta)] + \beta \KL(q_{\eta}(\theta) \,\|\,  q_{\eta_{0}}(\theta)) = \mathcal{L}_{\beta}(\eta).
    \end{equation}

    \paragraph{(ii) Gradient equivariance and update commutativity.} Differentiating the identity $\mathcal{L}_{\beta}(\phi_{g}(\eta)) = \mathcal{L}_{\beta}(\eta)$ with respect to $\eta$ and using the chain rule together with $\phi_{g}(\eta) = M_{g}^\top\eta + b_{g}$ yields
    \begin{equation}
        M_{g} \nabla_{\eta} \mathcal{L}_{\beta}\big|_{\phi_{g}(\eta)} = \nabla_{\eta} \mathcal{L}_{\beta}(\eta),
    \end{equation}
    so that
    \begin{equation}
        \nabla_{\eta} \mathcal{L}_{\beta}(\phi_{g}(\eta)) = M_{g}^{-1} \nabla_{\eta} \mathcal{L}_{\beta}(\eta).
    \end{equation}
    When $M_{g}$ is orthogonal, $M_{g}^{-1} = M_{g}^\top$, we get
    \begin{align}
        \mathcal{U}(\phi_{g}(\eta)) &= \phi_{g}(\eta) - \alpha \nabla_{\eta} \mathcal{L}_{\beta}(\phi_{g}(\eta)) = M_{g}^\top \eta + b_{g} - \alpha M_{g}^{-1} \nabla_{\eta} \mathcal{L}_{\beta}(\eta) \\
        &= M_{g}^\top (\eta - \alpha \nabla_{\eta} \mathcal{L}_{\beta}(\eta)) + b_{g} = \phi_{g}(\mathcal{U}(\eta)).
    \end{align}

    \paragraph{(iii) Invariant subspace is preserved.} Let $\eta \in H_{G}$, i.e., $\phi_{g}(\eta) = \eta$ for all $g \in G$. By the update commutativity \eqref{eq:update_commute} from (ii),
    \begin{equation}
        \phi_{g}(\mathcal{U}(\eta)) = \mathcal{U}(\phi_{g}(\eta)) = \mathcal{U}(\eta)
        \quad \text{for all } g \in G,
    \end{equation}
    so $\mathcal{U}(\eta) \in H_{G}$. By induction on $t$, if $\eta^{(0)} \in H_{G}$ then $\eta^{(t)} \in H_{G}$ for all $t \geq 0$.

    \paragraph{(iv) Optimal solutions inherit symmetry.} Let $\eta^{\ast} \in H^{\ast} = \arg\min_{\eta} \mathcal{L}_{\beta}(\eta)$. By (i),
    \begin{equation}
        \mathcal{L}_{\beta}(\phi_{g}(\eta^{\ast})) = \mathcal{L}_{\beta}(\eta^{\ast}) = \min_{\eta} \mathcal{L}_{\beta}(\eta),
    \end{equation}
    so $\phi_{g}(\eta^{\ast}) \in H^{\ast}$ for all $g \in G$; i.e., $H^{\ast}$ is closed under $\phi_{g}$. If the minimizer is unique, then $\phi_{g}(\eta^{\ast}) = \eta^{\ast}$ for all $g \in G$, meaning $\eta^{\ast} \in H_{G}$.
\end{proof}

\section{Proof of \cref{thm:prior-independent-convergence}}
\label[app]{app:proof-of-thm-prior-independent-convergence}

\restateconvergence*

\begin{proof}
    Note that the log-likelihood does not depend on the prior. Hence, using the same calculation as in the proof of Theorem \ref{thm:equivariance-of-vi} (i),
    we show that $R_{\mathrm{aug}}$ is $G$-invariant on the natural parameter space:
    \begin{equation}
        R_{\mathrm{aug}}(\phi_{g}(\eta)) = R_{\mathrm{aug}}(\eta), \quad \forall g \in G,\, \eta \in H.
    \end{equation}
    In particular, the set $H^{\ast}_{R} := \arg\min_{\eta} R_{\mathrm{aug}}(\eta)$ is closed under $\phi_{g}$ for all $g \in G$.

    Fix any $\eta^{\ast}_{R} \in H^{\ast}_{R}$. By the optimality of $\eta^{\ast}_{\beta}(p_{0})$ for $\mathcal{L}_{\beta}(\cdot; p_{0})$,
    \begin{equation}
        R_{\mathrm{aug}}(\eta^{\ast}_{\beta}(p_{0})) + \beta \KL(q_{\eta^{\ast}_{\beta}(p_{0})} \Vert p_{0}) \leq R_{\mathrm{aug}}(\eta^{\ast}_{R}) + \beta \KL(q_{\eta^{\ast}_{R}} \Vert p_{0}).
    \end{equation}
    Dropping the non-negative KL term on the left and using $R_{\mathrm{aug}}(\eta^{\ast}_{R}) = R^{\ast}_{\mathrm{aug}}$,
    \begin{equation}
        R_{\mathrm{aug}}(\eta^{\ast}_{\beta}(p_{0})) \leq R^{\ast}_{\mathrm{aug}} + \beta \KL(q_{\eta^{\ast}_{R}} \Vert p_{0}).
    \end{equation}
    Taking the infimum over $\eta^{\ast}_{R} \in H^{\ast}_{R}$ on the right yields the stated bound
    \begin{equation}
        R_{\mathrm{aug}}(\eta^{\ast}_{\beta}(p_{0})) \leq R^{\ast}_{\mathrm{aug}} + \beta \cdot C(p_{0}).
    \end{equation}
    Combined with $R_{\mathrm{aug}}(\eta^{\ast}_{\beta}(p_{0})) \geq R^{\ast}_{\mathrm{aug}}$ (by the definition of $R^{\ast}_{\mathrm{aug}}$), this gives
    \begin{equation}
        R^{\ast}_{\mathrm{aug}} \leq R_{\mathrm{aug}}(\eta^{\ast}_{\beta}(p_{0})) \leq R^{\ast}_{\mathrm{aug}} + \beta \cdot C(p_{0}).
    \end{equation}
    For any fixed $p_{0}$ with $C(p_{0}) < \infty$, sending $\beta \to 0$ (equivalently $N \to \infty$, since $\beta = 1/N$) yields $R_{\mathrm{aug}}(\eta^{\ast}_{\beta}(p_{0})) \to R^{\ast}_{\mathrm{aug}}$, independent of $p_{0}$.
\end{proof}

\section{Proof of \cref{thm:complexity-for-G-e-equivariance}}
\label[app]{app:proof-of-thm-complexity-for-G-e-equivariance}

Before stating the proof, we recall the definitions of the empirical and Monte Carlo equivariance defects.

\begin{definition}[Empirical and Monte Carlo equivariance defects]
    Given a dataset $\{x_{i}\}_{i=1}^{N_{0}}$, the empirical equivariance defect is
    \begin{equation}
    \label{equ:def-delta-tilde}
        \widetilde{\Delta}^{\mathrm{eq}}_{F}(\eta) := \frac{1}{N_{0}}\sum_{i=1}^{N_{0}} \frac{1}{|G|} \sum_{g \in G} \left\|F_{\eta}(gx_{i}) - g F_{\eta}(x_{i})\right\|^{2}.
    \end{equation}
    Its Monte Carlo approximation using $T$ posterior samples is
    \begin{equation}
    \label{equ:def-delta-hat}
        \widehat{\Delta}^{\mathrm{eq}}_{F}(\eta) := \frac{1}{N_{0}}\sum_{i=1}^{N_{0}} \frac{1}{|G|} \sum_{g \in G} \left\|\widehat{F}_{\eta}(gx_{i}) - g \widehat{F}_{\eta}(x_{i})\right\|^{2}, \quad \widehat{F}_{\eta}(x) := \frac{1}{T} \sum_{t=1}^{T}  f(x; \theta^{(t)}),
    \end{equation}
    where $\{\theta^{(t)}\}_{t=1}^{T}$ are i.i.d.\ samples from $q_{\eta}(\theta)$.
\end{definition}

We now restate \cref{thm:complexity-for-G-e-equivariance} and prove it thereafter.

\restatecomplexity*

\begin{proof}

Suppose that there exists $M > 0$ such that $\|f(x;\theta)\| \le M$ for all $x \in \mathcal{X}$ and $q_{\eta}$-a.s.\ $\theta$. Let us begin by noting that a simple application of Jensen's inequality shows that for each $x$,
\begin{align}
 \|F_{\eta}(x)\|& = \|\mathbb{E}_{q_{\eta}}[f(x;\theta)]\| \le \mathbb{E}_{q_{\eta}}[\|f(x;\theta)\|] \le M, \\
 \|\widehat{F}_{\eta}(x)\| &= \frac{1}{T}\left\|  \sum_{t=1}^T f(x;\theta^{(t)})\right\| \leq \frac{1}{T} \sum_{t=1}^T \left\|f(x; \theta^{(t)})\right\| \le M.
\end{align}
 Using the triangle inequality and orthogonality of the group action, this yields
 \begin{align}
     \|F_{\eta}(gx)\pm gF_\eta(x)\| \leq 2M, \quad \|\widehat{F}_\eta(gx)\pm g\widehat{F}_\eta(x)\| \leq 2M \label{eq:bounds}
 \end{align}
Hence, the quantities we need to estimate are essentially sums of i.i.d. bounded terms. This makes standard concentration inequalities viable. We now carry out the details.
    
    \paragraph{Step 1: Dataset-level concentration.} Define
    \begin{equation}
        Z_{i} := \frac{1}{|G|} \sum_{g \in G} \|F_{\eta}(gx_{i}) - g F_{\eta}(x_{i})\|^{2}.
    \end{equation}
    These terms are bounded, $Z_{i} \in [0,4M^{2}]$, their common expected value is $\mathbb{E}[Z_{i}] = \Delta^{\mathrm{eq}}_{F}(\eta)$, and their mean is $\frac{1}{N_{0}}\sum_{i=1}^{N_{0}} Z_{i} = \widetilde{\Delta}^{\mathrm{eq}}_{F}(\eta)$. Applying Hoeffding's inequality, we get for any $ t > 0$,
    \begin{equation}
        \mathbb{P}\!\left(\widetilde{\Delta}^{\mathrm{eq}}_{F}(\eta) -\Delta^{\mathrm{eq}}_{F}(\eta) \ge  t\right) \le \exp\!\left(-\frac{2 N_{0} t^{2}}{(4M^{2})^{2}}\right) = \exp\!\left(-\frac{N_{0}  t^{2}}{8 M^{4}}\right).
    \end{equation}
    Setting the right-hand side to $\delta/2$ and solving for $ t$, we get that with probability at least $1-\delta/2$,
    \begin{equation} \label{eq:hoeffding}
        \widetilde{\Delta}^{\mathrm{eq}}_{F}(\eta)  \leq \Delta^{\mathrm{eq}}_{F}(\eta)+ C_{\mathrm{data}}\sqrt{\frac{\log(2/\delta)}{N_{0}}},
    \end{equation}
    where $C_{\mathrm{data}} := 2\sqrt{2}\,M^{2}$.
    
    \paragraph{Step 2: Monte Carlo concentration.} The estimator $\widehat{\Delta}^{\mathrm{eq}}_{F}(\eta)$ differs from $\widetilde{\Delta}^{\mathrm{eq}}_{F}(\eta)$ only through replacing the posterior expectation $F_{\eta}(x)$ by the empirical average $\widehat{F}_{\eta}(x)$. Note that in contrast to above, $\mathbb{E}(\widehat{\Delta}_F^{\mathrm{eq}})$ is not equal to $\widetilde{\Delta}_F^{\mathrm{eq}}(\eta)$. There instead is a small bias term. We first control that bias, and then  bound $\widehat{\Delta}^{\mathrm{eq}}_{F}(\eta)$ around its mean via McDiarmid's inequality.

    \emph{Bias.} Writing $D := F_{\eta}(gx) - g F_{\eta}(x)$ and $\widehat{D} := \widehat{F}_{\eta}(gx) - g \widehat{F}_{\eta}(x)$, we have
    \begin{equation}
        \mathbb{E}\!\left[\|\widehat{D}\|^{2}\right] - \|D\|^{2} = \mathrm{tr}\!\left(\mathrm{Cov}(\widehat{D})\right) \le \frac{4 M^{2}}{T},
    \end{equation}
    since $\widehat{D}$ is an average of $T$ i.i.d.\ terms each bounded in norm by $2M$. Averaging over $G$ and the data samples yields
    \begin{equation} \label{eq:bias}
        \mathbb{E}[\widehat{\Delta}^{\mathrm{eq}}_{F}(\eta)] - \widetilde{\Delta}^{\mathrm{eq}}_{F}(\eta) \le \frac{C_{\mathrm{bias}}}{T}\leq \frac{C_{\mathrm{bias}}}{\sqrt{T}},
    \end{equation}
    with $C_{\mathrm{bias}} = 8 M^{2}$. 
    
    \emph{Bounded differences.} Replacing a single sample $\theta^{(s)}$ by $\theta'^{(s)}$ changes any $\widehat{F}_{\eta}(x)$ by at most $\frac{1}{T}\|f(x;\theta^{(s)}) - f(x;\theta'^{(s)})\| \le 2M/T$. By the triangle inequality, the same replacement changes every $\widehat{F}_{\eta}(gx_{i}) - g \widehat{F}_{\eta}(x_{i})$ less than $4M/T$. Using the identity $\|a\|^{2} - \|b\|^{2} = \langle a-b, a+b\rangle$ together with bounds $\|a\| = \|\widehat{F}_\eta(gx_{i}) - g\widehat{F}_\eta(x_{i})\| \le 2M,\, \|b\| = \|\widehat{F}'_\eta(gx_{i}) - g\widehat{F}'_\eta(x_{i})\| \le 2M$ from~\eqref{eq:bounds}, we get
    \begin{equation}
        \left|\|\widehat{F}_{\eta}(gx_{i}) - g \widehat{F}_{\eta}(x_{i})\|^{2} - \|\widehat{F}'_{\eta}(gx_{i}) - g \widehat{F}'_{\eta}(x_{i})\|^{2}\right| \le \frac{4M}{T} \cdot 4M = \frac{16 M^{2}}{T}.
    \end{equation}
    Averaging over $i$ and $g$ preserves this bound, so $\widehat{\Delta}^{\mathrm{eq}}_{F}(\eta)$ satisfies the bounded-differences condition with constants $c_{s} = 16 M^{2}/T$ for each $s = 1, \ldots, T$. 
    
    McDiarmid's inequality \citep{McDiarmid_1989,doobREGULARITYPROPERTIESCERTAIN} states that any function $\Phi$ of independent random variables satisfies
    \begin{equation}
    \mathbb{P}\big(\mathbb{E}[\Phi] - \Phi \ge t\big) \le \exp\!\left(-\frac{2 t^{2}}{\sum_{s=1}^{T} c_{s}^{2}}\right),
    \end{equation}
    where $c_{s}$ is the bounded-difference constant of $\Phi$ in its $s$-th argument, i.e. substituting the $s$-th argument of $\Phi$ with a different value changes $\Phi$ by at most $c_s$.
    
    Substituting $\Phi = \widehat{\Delta}^{\mathrm{eq}}_{F}(\eta)$ and $c_{s} = 16 M^{2}/T$, we have $\sum_{s=1}^{T} c_{s}^{2} = T \cdot (16 M^{2}/T)^{2} = 256 M^{4}/T$. Hence,
    \begin{equation}
        \mathbb{P}\!\left(\mathbb{E}[\widehat{\Delta}^{\mathrm{eq}}_{F}(\eta)] - \widehat{\Delta}^{\mathrm{eq}}_{F}(\eta) \ge t\right) \le \exp \left(-\frac{T t^{2}}{128 M^{4}}\right).
    \end{equation}
    
    Setting the right-hand side to $\delta/2$ and solving for $t$ yields, with probability at least $1-\delta/2$,
    \begin{equation} \label{eq:McDiarmid}
        \mathbb{E}[\widehat{\Delta}^{\mathrm{eq}}_{F}(\eta)] - \widehat{\Delta}^{\mathrm{eq}}_{F}(\eta) \le \frac{16 M^{2}}{\sqrt{2T}}\sqrt{\log(2/\delta)} = C_{\mathrm{mc}}\sqrt{\frac{\log(2/\delta)}{T}},
    \end{equation}
    where $C_{\mathrm{mc}} = 16 M^{2}/\sqrt{2} = 8\sqrt{2}\,M^{2}$.

    \paragraph{Step 3: Combining deviation and bias.} We now combine the above to get
    \begin{align}
        \widehat{\Delta}_F^{\mathrm{eq}}(\eta) &= \Delta^{\mathrm{eq}}_F(\eta) + \underbrace{(\widetilde{\Delta}_F^{\mathrm{eq}}(\eta) - \Delta^{\mathrm{eq}}_F(\eta))}_{\eqref{eq:hoeffding}} + \underbrace{(\mathbb{E}[\widehat{\Delta}_F^{\mathrm{eq}}(\eta)] - \widetilde{\Delta}_F^{\mathrm{eq}}(\eta))}_{\eqref{eq:bias}} + \underbrace{(\widehat{\Delta}_F^{\mathrm{eq}}(\eta) - \mathbb{E}[\widehat{\Delta}_F^{\mathrm{eq}}(\eta)] )}_{\eqref{eq:McDiarmid}} \\
        \leq &\Delta^{\mathrm{eq}}_F(\eta) +C_{\mathrm{data}}\sqrt{\frac{\log(2/\delta)}{N_0}}+C_{\mathrm{bias}}\sqrt{\frac{1}{T}}+C_{\mathrm{mc}}\cdot \sqrt{\frac{\log(2/\delta)}{T}},
    \end{align}
    which yields the first claim with $C_{\mathrm{weight}}= C_{\mathrm{mc}} + C_{\mathrm{bias}}\cdot\sqrt{1/\log(2)}$.

    \paragraph{Step 4: Sample requirements.} For the decomposition $\varepsilon = \varepsilon_{0} + \varepsilon_{\mathrm{data}} + \varepsilon_{\mathrm{weight}} + \varepsilon_{\mathrm{bias}}$, requiring each term in the bound to be at most its allotted budget gives
    \begin{equation}
        C_{\mathrm{data}}\sqrt{\frac{\log(2/\delta)}{N_{0}}} \leq \varepsilon_{\mathrm{data}}, 
        \quad 
        C_{\mathrm{weight}}\sqrt{\frac{\log(2/\delta)}{T}} \leq \varepsilon_{\mathrm{weight}}.
    \end{equation}
    Squaring and solving for $N_{0}$ and $T$ yields the stated requirements.
\end{proof}

\section{Geometric averaging stays in $\mathcal{Q}$}
\label[app]{app:geometric-averaging-stays-in-Q}

\begin{lemma}[Geometric mean invariance and membership]
\label{lmm:geometric-mean-invariance-and-membership}
    Suppose $\mathcal{Q}$ is closed under all push-forwards $\mathcal{T}_{g}$ as in \cref{thm:closure-under-push-forward}, i.e., for each $g \in G$ and any $\eta \in H$,
    \begin{equation}
        (\mathcal{T}_{g} \# q_{\eta})(\theta) = h(\theta) \exp\big(\eta_{g}^\top T(\theta) - A_{g}(\eta)\big),
    \end{equation}
    for some $\eta_{g} \in H$. Then the geometric mean
    \begin{equation}
        \tilde{q}(\theta) \propto \left[\prod_{g \in G} \Big(\mathcal{T}_{g} \# q_{\eta}\Big)(\theta)\right]^{\frac{1}{|G|}}
    \end{equation}
    is invariant under the group action and can be written as an exponential-family member with the same base measure $h$ and sufficient statistic $T$:
    \begin{equation}
        \tilde{q}(\theta) = h(\theta) \exp\Big(\tilde{\eta}^\top T(\theta) - \tilde{A}\Big),
    \end{equation}
    where
    \begin{equation}
        \tilde{\eta} = \frac{1}{|G|} \sum_{g \in G} \eta_{g}, \quad \tilde{A} = \frac{1}{|G|} \sum_{g \in G} A(\eta_{g}).
    \end{equation}
\end{lemma}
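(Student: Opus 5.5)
The plan has two parts. First, compute the geometric mean directly from the exponential-family form of each push-forward to identify $\tilde{\eta}$. Second, use the affine action property from \cref{thm:closure-under-push-forward} to get invariance.

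\textbf{Step 1 (membership).} By \cref{thm:closure-under-push-forward}, each push-forward has the form $(\mathcal{T}_g\#q_\eta)(\theta)=h(\theta)\exp(\eta_g^\top T(\theta)-A(\eta_g))$ with $\eta_g=\phi_g(\eta)\in H$, since $A_g(\eta)=A(\eta_g)$ was shown there. Taking the product over $g\in G$ and the $1/|G|$-th power gives
\[
h(\theta)^{|G|\cdot\frac{1}{|G|}}\exp\Big(\big(\tfrac{1}{|G|}\textstyle\sum_g\eta_g\big)^\top T(\theta)-\tfrac{1}{|G|}\textstyle\sum_g A(\eta_g)\Big),
\]
which is $h(\theta)\exp(\tilde\eta^\top T(\theta)-\tilde A)$ up to the proportionality constant. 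Membership in $\mathcal{Q}$ requires $\tilde\eta\in H$. This holds because the natural parameter space of an exponential family is convex (by Hölder's inequality, $A$ is convex and its domain is convex) and $\tilde\eta$ is a convex combination of the points $\eta_g\in H$.

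\textbf{Normalization subtlety.} Once $\tilde\eta\in H$, the normalized density is $h\exp(\tilde\eta^\top T-A(\tilde\eta))$. By Jensen's inequality (convexity of $A$), $A(\tilde\eta)\le\tilde A$, so in general the normalizer is $A(\tilde\eta)$ rather than $\tilde A$ itself. The lemma's displayed $\tilde A$ is correct as the log-normalizer of the unnormalized geometric mean. I will state that the normalized $\tilde q$ equals $q_{\tilde\eta}$, and that $\tilde A$ is the constant appearing before normalization; the "$\propto$" in the statement absorbs the difference.

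\textbf{Step 2 (invariance).} I will use that $g\mapsto\phi_g$ is an affine action, proven in \cref{thm:closure-under-push-forward}. For any $h'\in G$,
\[
\phi_{h'}(\tilde\eta)=\tfrac{1}{|G|}\textstyle\sum_g\phi_{h'}(\phi_g(\eta))=\tfrac{1}{|G|}\textstyle\sum_g\phi_{h'g}(\eta)=\tilde\eta .
\]
The first equality holds because affine maps commute with averaging, since the weights sum to one. The last equality holds because $g\mapsto h'g$ is a bijection of $G$. Hence $\mathcal{T}_{h'}\#q_{\tilde\eta}=q_{\phi_{h'}(\tilde\eta)}=q_{\tilde\eta}$, so $\tilde q$ is $G$-invariant. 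An alternative is to argue on densities directly: $\mathcal{T}_{h'}\#$ applied to the geometric mean permutes the factors. The step $(\mathcal{T}\#(\prod p_g)^{1/|G|})=(\prod\mathcal{T}\#p_g)^{1/|G|}$ holds because the Jacobian factor $J$ satisfies $(J^{|G|})^{1/|G|}=J$.

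\textbf{Main obstacle.} The only delicate points are the convexity of $H$, needed so that $\tilde\eta\in H$, and the normalization constant discussed above.
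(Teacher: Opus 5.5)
Your proof is correct and essentially the paper's: the membership step is the same direct product computation, and your invariance argument $\phi_{h'}(\tilde\eta)=\frac{1}{|G|}\sum_{g}\phi_{h'g}(\eta)=\tilde\eta$ is the natural-parameter version of the paper's observation that $\mathcal{T}_{g_0}$ permutes the factors of the geometric mean. Your Jacobian remark supplies a detail the paper leaves implicit in its density-level argument. Your two extra points are correct refinements the paper's proof skips. First, the displayed $\tilde A$ is only the constant in the unnormalized product, whose total mass is $e^{A(\tilde\eta)-\tilde A}\le 1$, with strict inequality unless $\eta\in H_G$ (by strict convexity of $A$ for a minimal family); so the normalized $\tilde q$ is $q_{\tilde\eta}$, not $h\exp(\tilde\eta^\top T-\tilde A)$. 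Second, deriving $\tilde\eta\in H$ from convexity presumes $H$ is the full natural parameter space $\{\eta : A(\eta)<\infty\}$; for an arbitrary open $H$ the orbit average can leave $H$.
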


\begin{proof}
    Using closure under push-forward, for each $g$,
    \begin{equation}
        (\mathcal{T}_{g} \# q_{\eta})(\theta) = h(\theta) \exp \big( \eta_{g}^\top T(\theta) - A(\eta_{g}) \big).
    \end{equation}
    Taking the $\frac{1}{|G|}$-power and multiplying over all $g \in G$ yields
    \begin{equation}
        \tilde{q}(\theta) = h(\theta) \exp\Big(\frac{1}{|G|} \sum_{g \in G} \eta_{g}^\top T(\theta) - \frac{1}{|G|} \sum_{g \in G}  A(\eta_{g})\Big),
    \end{equation}
    which is of the standard exponential-family form with the stated $\tilde{\eta}$ and $\tilde{A}$. Invariance follows because for any $g_{0} \in G$, the set $\{\mathcal{T}_{g} \# q\}_{g\in G}$ is permuted under $\mathcal{T}_{g_{0}}$ and the product over $G$ (with equal exponents) is unchanged. Hence $\tilde{q}$ is invariant and belongs to $\mathcal{Q}$.
\end{proof}

\section{The two orbit averaging strategies}
\label[app]{app:symmetrization-strats}
In this section, we show how geometric averaging and projection naturally arise as two instances of orbit averaging. First, let us present in detail the canonical strategy of defining parameter space representations discussed in the main paper. Remember that a classical neural network (e.g. an MLP) consists of layers $\theta_\ell$ of affine maps, whose domains and co-domains are vector spaces, say $\theta_\ell:~\gV_\ell~\to~\gV_{\ell+1}$, with $\gV_0=\gX$ and $\gV_L = \gY$. One now introduces representations $\rho_\ell$ on each $\gV_\ell$, with $\rho_{0}=\rho_{\gX}$ and $\rho_L =\rho_{\gY}$ to keep the construction compatible with the given in- and output representations, and defines
\begin{align}
    \rho_{\ell}(g)(\theta_\ell) = \rho_{\gV_{\ell+1}}(g) \circ \theta_\ell \circ \rho_{\gV_\ell}(g)^{-1} \label{eq:liftedrep}
\end{align}
If the activation functions are equivariant with respect to the intermediate representation, the above definition always yields in a representation satisfying \eqref{eq:compatibility} -- see \citet{nordenfors2025optimization}. 

For the networks that we are considering in Section \ref{sec:symmetrization-of-the-variational-posterior}, the input and intermediate spaces consist of \emph{channels of images}. That is, for a space $\mathcal{I}$ of images, we set
\begin{align}
    \gV_\ell = \mathcal{I}^{n_\ell}.
\end{align}
On the space $\mathcal{I}$, $C_4$ acts through a rotation operator $R:\mathcal{I}\to \mathcal{I}$. We can extend this to $\gV_\ell$ in several ways. First, we can rotate channel-wise:
\begin{align}
    (\rho_{\gV_\ell}^{\mathrm{ch}} (i) v)_k = R^{i} v_k.
\end{align}
Alternatively, we can additionally act via 'rolling' along the channel dimension. More concretely, one divides the channels into groups of four, and then defines
\begin{align}
    (\rho_{\gV_\ell}^{\mathrm{gcnn}}(i) v)_{4n + k} = R^iv_{4n +(k+i \, \mathrm{mod} \, 4)}.
\end{align}
The second representation is the one that leads to GCNNs.

Let us now apply \eqref{eq:liftedrep} to a convolutional operator,
\begin{align}
    (\theta_\ell v)_k = \sum_j \eta^{kj}*v_j.
\end{align}
Note that rotation in the image space and convolutions are compatible, in the following sense:
\begin{align}
    (R\eta)*(Rv) = R(\eta*v),
\end{align}

\paragraph{Geometric average.} For the channel-wise representation, we obtain:
\begin{align}
    (\rho^{\mathrm{ch}}_{\ell}(i)v)_k &= \rho_{\gV_{\ell+1}}^{\mathrm{ch}}(i) ([\theta_\ell \circ \rho_{\gV_\ell}^{\mathrm{ch}}(i)^{-1}]v)_k = R^i\sum_{j}\eta^{kj}*(\rho_{\gV_\ell}^{\mathrm{ch}}(i)^{-1}v)_j \\
    &= R^i\sum_{j}\eta^{kj}*(R^{-i}v_j) = \sum_{j}(R^i\eta^{kj})*v_j. 
\end{align}
That is, $\rho^{\mathrm{ch}}$ acts via rotating each filter individually: $(\eta^{kj})_{k,j} \to (R^i\eta^{kj})_{k,j}$. Correspondingly, the orbit average averages all four rotations of each filter, yielding to a filter-wise symmetrization.

\paragraph{Projection.} For the GCNN-representation, let us restrict the calculation to the filters between two four-blocks, i.e. $n=0$. We get
\begin{align}
      (\rho^{\mathrm{gcnn}}_{\ell}(i)v)_{k} &= \rho_{\gV_{\ell+1}}^{\mathrm{gcnn}}(i) ([\theta_\ell \circ \rho_{\gV_\ell}^{\mathrm{gcnn}}(i)^{-1}]v)_k = R^i\sum_{j}\eta^{(k+i \, \mathrm{mod} \, 4), j}*(\rho_{\gV_\ell}^{\mathrm{gcnn}}(i)^{-1}v)_{j} \\
    &= R^i\sum_{j}\eta^{(k+i \, \mathrm{mod} \, 4),j}*(R^{-i}v_{j-i \, \mathrm{mod}\, 4}) = \sum_{j}R^i\eta^{(k+i \, \mathrm{mod} \, 4),(j+i \, \mathrm{mod} \, 4)}*v_j.
\end{align}
Thus, $\rho^{\mathrm{gcnn}}_{\ell}$ acts through both rotating and rolling the filters along the diagonals of each four-by-four block. The resulting average operation will calculate an average of rotated filters along each such diagonal, and then insert it into the block in rotated versions, giving $4\times4$-blocks as in Figure \ref{fig:proj-vs-avg-trajectory}.

\section{Expansion along orbit: filter arrangement ablation}
\label[app]{app:expansion-along-orbit-filter-arrangement-ablation}

Let us now discuss the orbit expansion strategy. The idea is to construct a set of equivariant filters, starting from a smaller set of generic filters. To keep the exposition light, we start with a single filter $\eta$ and expand it to a single four-times-four block -- the construction naturally generalizes to filter banks.

\begin{figure}[t]
    \centering
    \includegraphics[width=1\linewidth, trim=0.29cm 0.9cm 0.8cm 1cm, clip]{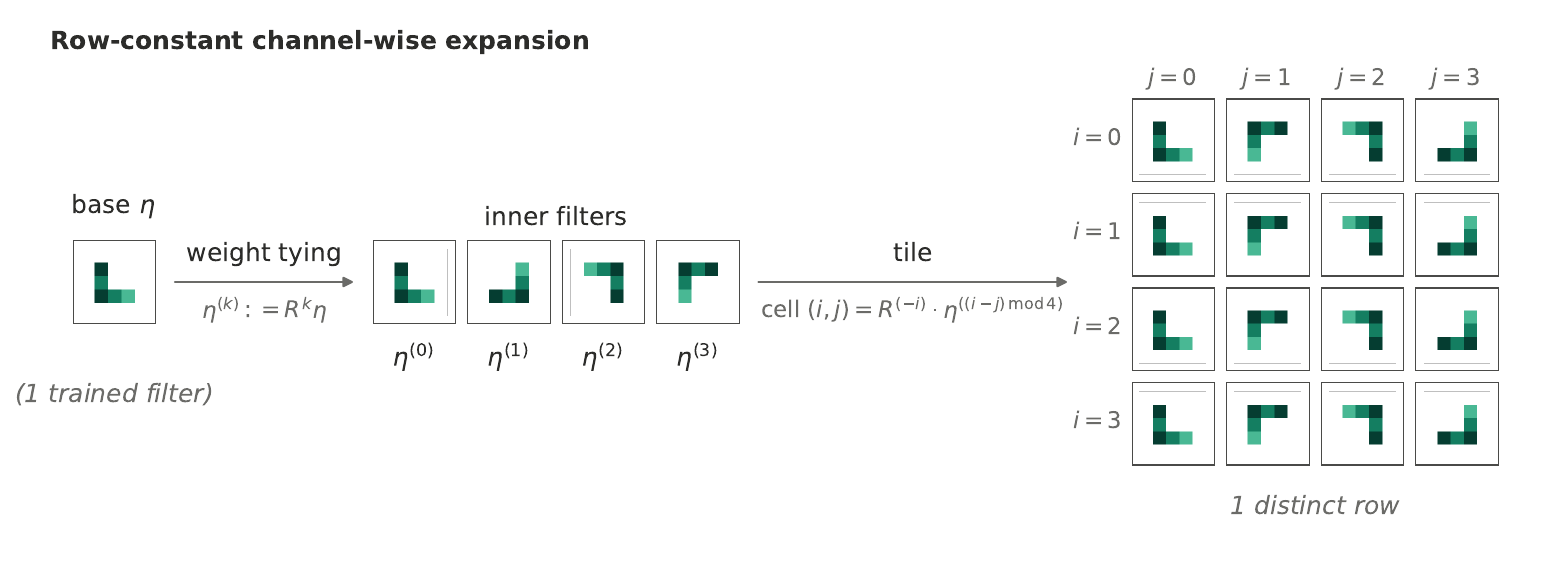}
    \includegraphics[width=1\linewidth, trim=0.29cm 0.9cm 0.8cm 1cm, clip]{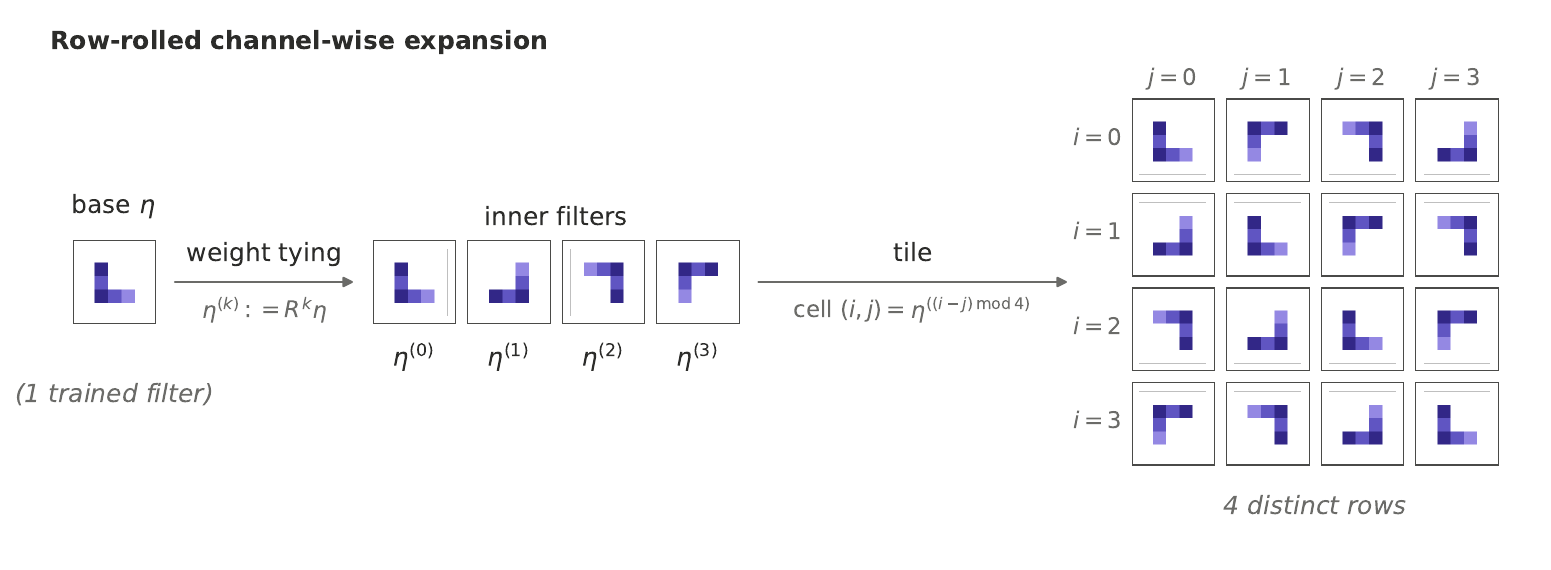}
    \includegraphics[width=1\linewidth, trim=0.29cm 0.9cm 0.8cm 1cm, clip]{plots/symmetrization_panels_ds.pdf}
    \caption{Three filter arrangements for orbit expansion under $C_{4}$,
    starting from a single base filter $\eta$ trained in the
    width-$1/|G|$ Stage 1 network.  All three share the same first row, and only differ in how that row is expanded to the other rows.
    \textbf{Top:} Row-constant channel-wise expansion
    \textbf{Middle:} Row-rolled channel-wise expansion
    \textbf{Bottom:} GCNN-expansion }\label{fig:orbit-expansion-arrangements}
\end{figure}

Since there are several notions of equivariance, there are different ways to achieve the equivariance of the larger filterbank. In Appendix~\ref{app:symmetrization-strats}, we have already derived conditions for equivariance when both the input and output representations are equal to $\rho^{\mathrm{ch}}$ and $\rho^{\mathrm{gcnn}}$, respectively. What are the conditions when the representations are unequal? Let us say that the representation on the layer $\gV_1$ is $\rho^{\mathrm{gcnn}}$ and the one on $\gV_2$ is $\rho^{\mathrm{ch}}$. The lifted representation then becomes
  \begin{align*}
           (\rho^{}_{1}(i)v)_k &= \rho_{\gV_{2}}^{\mathrm{ch}}(i) ([\theta_\ell \circ \rho_{\gV_1}^{\mathrm{gcnn}}(i)^{-1}]v)_k = R^i\sum_{j}\eta^{kj}*(\rho_{\gV_1}^{\mathrm{gcnn}}(i)^{-1}v)_j \\
    &= R^i\sum_{j}\eta^{kj}*(R^{-i}v_{j-i}) = \sum_{j}(R^i\eta^{k,j+i})*v_j. 
    \end{align*}
    Consequently, a layer is equivariant if and only if 
    \begin{equation}
        R^i\eta^{k,j+i}=\eta^{k,j} \text{ for all }i,j,k.
    \end{equation} 
    This means that within each row,  the filters should be rotations of each other. In Figure \ref{fig:orbit-expansion-arrangements}, we plot three ways to construct filters that fulfill this condition. We take the filter $\eta$ and rotate it to form the first row, and expand that structure into an entire block in three different ways
    \begin{itemize}
        \item In the \textbf{row-constant channel-wise expansion}, we simply copy the first row to the others.
        \item In the \textbf{row-rolled channel-wise expansion}, we roll the filters along the row dimension to define the new rows. 
        \item In the \textbf{GCNN expansion}, we roll the filters along the row dimension and rotate them to define the new rows. 
    \end{itemize}  
    All these construction yield per se valid structures of equivariance, since there is a freedom to choose the representation on the second feature space.  At the same time, only the third construction is equivariant in the '$\rho^{\mathrm{gcnn}}$-to-$\rho^{\mathrm{gcnn}}$ sense', so it is special. See Figure~\ref{fig: equivarization} for a summary.

    \begin{remark}
        One could also let $\rho^{ch}$ act on the space $\gV_1$, but this would necessitate using rotation-symmetric filters in the first layer, hurting expressiveness severely. We therefore do not consider that construction.
    \end{remark}

    \begin{table}[t]
\centering
\small
\setlength{\tabcolsep}{4pt}
\begin{tabular}{@{}ll ccc@{}}
\toprule
\textbf{Variant} & \textbf{Epoch}
  & \textbf{Acc.\,(\%)} $\uparrow$
  & \textbf{OSP} $\uparrow$
  & \textbf{Sym.\,KL} $\downarrow$ \\
\midrule
\multirow{3}{*}{\shortstack[l]{Row-constant channel-wise \\expansion}}
  & 0   & $80.2{\scriptstyle\pm0.3}$ & $0.957{\scriptstyle\pm.004}$ & $0.0078{\scriptstyle\pm.0011}$ \\
  & 20  & $80.3{\scriptstyle\pm0.5}$ & $0.956{\scriptstyle\pm.004}$ & $0.0084{\scriptstyle\pm.0014}$ \\
  & 100 & $80.1{\scriptstyle\pm0.5}$ & $0.955{\scriptstyle\pm.003}$ & $0.0086{\scriptstyle\pm.0006}$ \\
\midrule
\multirow{3}{*}{\shortstack[l]{Row-rolled channel-wise \\expansion}}& 0   & $80.2{\scriptstyle\pm0.3}$ & $0.956{\scriptstyle\pm.004}$ & $0.0084{\scriptstyle\pm.0010}$ \\
  & 20  & $80.3{\scriptstyle\pm0.5}$ & $0.956{\scriptstyle\pm.004}$ & $0.0088{\scriptstyle\pm.0013}$ \\
  & 100 & $80.1{\scriptstyle\pm0.5}$ & $0.954{\scriptstyle\pm.003}$ & $0.0086{\scriptstyle\pm.0006}$ \\
\midrule
\multirow{3}{*}{GCNN-expansion}& 0   & $\mathbf{80.4}{\scriptstyle\pm0.4}$ & $0.963{\scriptstyle\pm.003}$ & $0.0057{\scriptstyle\pm.0004}$ \\
  & 20  & $80.4{\scriptstyle\pm0.6}$ & $0.966{\scriptstyle\pm.004}$ & $0.0048{\scriptstyle\pm.0009}$ \\
  & \cellcolor{gray!15}100 & \cellcolor{gray!15}$80.0{\scriptstyle\pm0.5}$ & \cellcolor{gray!15}$\mathbf{0.968}{\scriptstyle\pm.003}$ & \cellcolor{gray!15}$\mathbf{0.0042}{\scriptstyle\pm.0007}$ \\
\bottomrule
\end{tabular}
\caption{Effect of filter arrangement and trigger timing in orbit
expansion (FashionMNIST, $N_{0} = 5\,000$, $C_4$ augmentation, SGD,
5 seeds), evaluated at the best-accuracy checkpoint.
\textbf{Bold}: best in each column.
\colorbox{gray!15}{Shaded}: best equivariance-accuracy trade-off.}
\label{tab:orbit-expansion-ablation}
\end{table}

    \begin{figure}[!h]
        \centering
        \begin{tikzpicture}[scale=0.38]
            \draw (0,0) rectangle (1,3);
            \node at (0.5,1.5) {$\gX$};
            \node at (0.5,4) {$\rho^{\mathrm{ch}}$};
            \draw (4,0) rectangle (5,3);
            \node at (4.5,1.5) {$\gV_1$};
            \node at (4.5,4) {$\rho^{\mathrm{gcnn}}$};
            \draw (8,0) rectangle (9,3);
            \node at (8.5,1.5) {$\gV_2$};
            \node at (8.5,4) {$\rho^{\mathrm{ch}}$};
            \node at (11.5,.5) {FC-layers};
            \draw (14,0) rectangle (15,3);
            \node at (14.5,1.5) {$\gY$};
            \node at (14.5,4) {$\rho^{\mathrm{triv}}$};
            \draw[->] (1.5,1.5) -- (3.5,1.5);
            \draw[->] (5.5,1.5) -- (7.5,1.5);
            \draw[->] (9.5,1.5) -- (13.5,1.5);

            \node at (20, 3) {Row-constant \cmark};
            \node at (20, 1.5) {Row-rolled \cmark};
            \node at (20, 0) {GCNN \cmark};
        \end{tikzpicture}
        
         \begin{tikzpicture}[scale=0.38]
            \draw (0,0) rectangle (1,3);
            \node at (0.5,1.5) {$\gX$};
            \node at (0.5,4) {$\rho^{\mathrm{ch}}$};
            \draw (4,0) rectangle (5,3);
            \node at (4.5,1.5) {$\gV_1$};
            \node at (4.5,4) {$\rho^{\mathrm{gcnn}}$};
            \draw (8,0) rectangle (9,3);
            \node at (8.5,1.5) {$\gV_2$};
            \node at (8.5,4) {$\rho^{\mathrm{gcnn}}$};
            \node at (11.5,.5) {FC-layers};
            \draw (14,0) rectangle (15,3);
            \node at (14.5,1.5) {$\gY$};
            \node at (14.5,4) {$\rho^{\mathrm{triv}}$};
            \draw[->] (1.5,1.5) -- (3.5,1.5);
            \draw[->] (5.5,1.5) -- (7.5,1.5);
            \draw[->] (9.5,1.5) -- (13.5,1.5);

            \node at (20, 3) {Row-constant \xmark};
            \node at (20, 1.5) {Row-rolled \xmark};
            \node at (20, 0) {GCNN \cmark};
        \end{tikzpicture}
\caption{Two ways of choosing the intermediate representations acting on the filter banks. All of the orbit expansion strategies are equivariant with respect to the top choice, but only the GCNN-strategy is equivariant with respect to the bottom choice. 
\label{fig: equivarization}}
        \end{figure}

    \begin{remark} Note that the middle construction is interesting in that it has an additional partial equivariance -- it is equivariant when $C_4$ acts \emph{only} through rolling on both input and output space, i.e. $(\rho^{\mathrm{roll}}(i)v)_k = v_{k-i}$. The corresponding calculation for the lifted representation becomes
    \begin{align*}
         (\rho^{}_{1}(i)v)_k &= \rho_{\gV_{2}}^{\mathrm{roll}}(i) ([\theta_\ell \circ \rho_{\gV_\ell}^{\mathrm{roll}}(i)^{-1}]v)_k = \sum_{j}\eta^{(k+i),j}*(\rho_{\gV_\ell}^{\mathrm{gcnn}}(i)^{-1}v)_j \\
    &=\sum_{j}\eta^{(k+i),j}*(v_{j-i}) = \sum_{j}\eta^{k+i,j+i}*v_j\,,
    \end{align*}
    and the equivariance condition becomes $\eta^{k+i,j+i}=\eta^{k,j}$ -- i.e. that the filters along each diagonal are equal. 
    \end{remark}

  \textbf{Experiment.} We evaluate all three variants with trigger epochs at 0, 20, and 100, using the same setup as \cref{sec:symmetrization-on-image-classification} (FashionMNIST, $N_{0} = 5\,000$, $C_4$ augmentation, SGD, 5 seeds). \cref{tab:orbit-expansion-ablation} reports metrics at the
best-accuracy checkpoint (the convention used throughout the paper).

The GCNN-expansion arrangement systematically outperformed the row-constant channel-wise expansion and row-rolled channel-wise expansion. This is not surprising given that they are the only ones that are equivariant when the $\rho^{\mathrm{gcnn}}$-representation is acting on all layers, which is the most expressive action in this setting. As we have seen, there are more filters that fulfill the weaker symmetry condition imposed by being equivariant from $\rho^{\mathrm{gcnn}}$ to $\rho^{\mathrm{ch}}$, intuitively leading to a weaker implicit bias. 

One can also additionally note that in the GCNN-mode, the outer rotation $R^{-i}$ and the inner shift index $(i-j)$ are oriented so that $i$ cancels in their composition, collapsing all rows to a single pattern. This leads to a lack of row-wise diversity, which seems to limit the effective degrees of freedom.

\section{Variational family closure ablation}
\label[app]{app:variational-family-closure-ablation}

Gaussian and Laplace families satisfy the conditions in \cref{thm:closure-under-push-forward} for linear group representations $\rho$, whereas a signed log-normal family with random, fixed sign patterns does not. The reason is that the sign assignments break the required affine structure of the sufficient statistics.

We design a controlled experiment to test whether the algebraic distinction has observable consequences during augmented variational training.

\paragraph{Setup.} We train Bayesian CNNs on FashionMNIST ($N_{0} = 20000$ training samples) with full $C_{4}$ orbit augmentation, using SGD for 100 epochs. Three variational families are compared: \textbf{(i)} Gaussian (mean-field, closed under $C_{4}$), \textbf{(ii)} Laplace (mean-field, closed under $C_{4}$), and \textbf{(iii)} signed log-normal with random fixed signs (not closed under $C_{4}$). All other hyper-parameters (architecture, learning rate, KL weights) are shared. Each configuration is run with 5 random seeds.

\paragraph{Metrics.} We measure the accuracy and symmetric KL divergence (Symm. KL) between the predictive distributions at $x$ and $gx$, evaluated on both train and test sets.

\paragraph{Results.} \cref{tab:closure-results} and \cref{fig:closure-trajectories} summarize the findings.

\begin{table}[h]
\centering
\small
\begin{tabular}{@{}l c c c c c@{}}
\toprule
 & & \multicolumn{2}{c}{Accuracy (\%)} & \multicolumn{2}{c}{Symm.\ KL ($\downarrow$)} \\
\cmidrule(lr){3-4} \cmidrule(lr){5-6}
Family & Closed? & Train & Test & Train & Test \\
\midrule
Gaussian       & \cmark & $90.79 \pm 0.17$ & $\mathbf{88.43 \pm 0.06}$ & $0.017 \pm 0.000$ & $\mathbf{0.023 \pm 0.001}$ \\
Laplace        & \cmark & $96.58 \pm 0.12$ & $87.74 \pm 0.30$ & $\mathbf{0.016 \pm 0.000}$ & $0.049 \pm 0.001$ \\
Signed LogNorm & \xmark & $90.31 \pm 0.13$ & $85.67 \pm 0.28$ & $0.036 \pm 0.002$ & $0.056 \pm 0.002$ \\
\bottomrule
\end{tabular}
\caption{Effect of variational-family closure on equivariance and accuracy under $C_4$ augmentation (epoch 100, mean $\pm$ std over 5 seeds). Symmetric KL divergence measures the equivariance defect (lower = more equivariant).}
\label{tab:closure-results}
\end{table}

\begin{figure}
    \centering
    \includegraphics[width=1\linewidth]{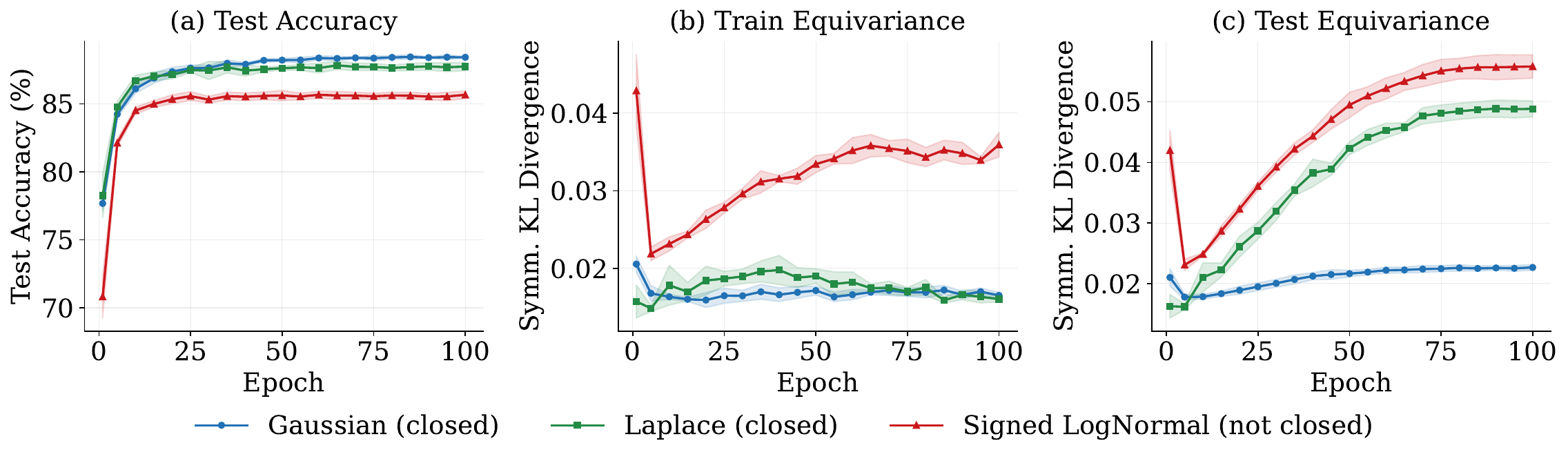}
    \caption{Trajectories for three variational families under $C_{4}$ augmentation (5 seeds, shaded region $= \pm$ std).
    \textbf{(a)} both closed families reach $\geq 87.7\%$ test accuracy; the non-closed signed log-normal plateaus at $\approx 85.7\%$.
    \textbf{(b)} Gaussian and Laplace converge to the same low train equivariance defect ($\approx 0.016$), confirming that closure enables the optimizer to find a near-equivariant posterior on training data, while the signed log-normal remains $> 2\times$ higher.
    \textbf{(c)} the test equivariance defect of all families grows during training as posteriors concentrate, but the ordering Gaussian $<$ Laplace $<$ log-normal is maintained throughout.}
    \label{fig:closure-trajectories}
\end{figure}

Both closed families converge to the same low train symmetric KL divergence ($\approx 0.016$), despite differing in distributional shape and training-set accuracy (Gaussian $90.79\%$, Laplace $96.58\%$). The signed log-normal, which is not closed, converges to a $2.2\times$ higher symmetric KL divergence ($0.036$), with non-overlapping seed ranges. This confirms the theorem's prediction: closure enables the optimizer to move the posterior toward a (near-)equivariant solution within the family, whereas a non-closed family faces a structural barrier.

We further find that equivariance acts as an implicit regularizer. Both closed families generalize better than the non-closed one: the Gaussian generalization gap is $2.4$ pp (train $90.79\%$ vs.\ test $88.43\%$), roughly half the $4.6$ pp gap of the signed log-normal (train $90.31\%$ vs.\ test $85.67\%$) at comparable training accuracy. This is consistent with equivariance reducing the effective degrees of freedom in the posterior.

While Laplace's train equivariance matches Gaussian, its test equivariance is noticeably worse ($0.049$ vs.\ $0.023$). We attribute this to the heavier tails of the Laplace distribution. The posterior satisfies the closure condition on training data, but the tail behaviour introduces additional prediction variance on unseen inputs, partially offsetting the equivariance benefit. This suggests that closure under group action is necessary but not sufficient for strong test-time equivariance; the posterior's concentration properties also play a role.

\section{Trigger timing and optimizer ablation}
\label[app]{app:trigger-timing-ablation}

\begin{table*}[t]
    \centering
    \small
    \setlength{\tabcolsep}{5pt}
    \begin{tabular}{@{}lll ccc@{}}
    \toprule
    \textbf{Method} & \textbf{Optimizer} & \textbf{Epoch}
      & \textbf{Acc.\,(\%)} $\uparrow$
      & \textbf{OSP} $\uparrow$
      & \textbf{Sym.\,KL} $\downarrow$ \\
    \midrule
    \multirow{2}{*}{Baseline} & AdamW & ---
      & $79.1{\scriptstyle\pm0.6}$ & $.923{\scriptstyle\pm.004}$ & $.0243{\scriptstyle\pm.0022}$ \\
       & SGD & ---
      & $79.0{\scriptstyle\pm0.6}$ & $.927{\scriptstyle\pm.003}$ & $.0217{\scriptstyle\pm.0028}$ \\
    \midrule
      \multirow{10}{*}{Geometric avg.} & \multirow{5}{*}{AdamW} & 0 & $77.7{\scriptstyle\pm1.0}$ & $.944{\scriptstyle\pm.002}$ & $.0118{\scriptstyle\pm.0009}$ \\
       &  & 20 & $77.9{\scriptstyle\pm0.4}$ & $.949{\scriptstyle\pm.005}$ & $.0096{\scriptstyle\pm.0011}$ \\
       &  & 50 & $78.2{\scriptstyle\pm0.5}$ & $.953{\scriptstyle\pm.008}$ & $.0090{\scriptstyle\pm.0033}$ \\
       &  & 100 & $77.3{\scriptstyle\pm0.9}$ & $.955{\scriptstyle\pm.008}$ & $.0076{\scriptstyle\pm.0025}$ \\
       &  & 150 & $76.8{\scriptstyle\pm0.5}$ & $.955{\scriptstyle\pm.009}$ & $.0074{\scriptstyle\pm.0026}$ \\
    \cmidrule(l){2-6}
       & \multirow{5}{*}{SGD} & 0 & $78.6{\scriptstyle\pm0.4}$ & $.950{\scriptstyle\pm.003}$ & $.0094{\scriptstyle\pm.0009}$ \\
       &  & 20 & $78.6{\scriptstyle\pm0.8}$ & $.952{\scriptstyle\pm.005}$ & $.0094{\scriptstyle\pm.0020}$ \\
       &  & 50 & $78.0{\scriptstyle\pm0.4}$ & $.947{\scriptstyle\pm.006}$ & $.0116{\scriptstyle\pm.0031}$ \\
       &  & 100 & $78.0{\scriptstyle\pm1.1}$ & $.946{\scriptstyle\pm.005}$ & $.0112{\scriptstyle\pm.0020}$ \\
       &  & 150 & $77.9{\scriptstyle\pm1.3}$ & $.944{\scriptstyle\pm.006}$ & $.0118{\scriptstyle\pm.0025}$ \\
    \midrule
      \multirow{10}{*}{Projection} & \multirow{5}{*}{AdamW} & 0 & $77.9{\scriptstyle\pm0.8}$ & $.922{\scriptstyle\pm.002}$ & $.0215{\scriptstyle\pm.0017}$ \\
       &  & 20 & $77.8{\scriptstyle\pm0.6}$ & $.924{\scriptstyle\pm.004}$ & $.0215{\scriptstyle\pm.0016}$ \\
       &  & 50 & $76.8{\scriptstyle\pm0.2}$ & $.926{\scriptstyle\pm.006}$ & $.0200{\scriptstyle\pm.0023}$ \\
       &  & 100 & $76.2{\scriptstyle\pm0.6}$ & $.928{\scriptstyle\pm.004}$ & $.0185{\scriptstyle\pm.0020}$ \\
       &  & 150 & $76.0{\scriptstyle\pm0.2}$ & $.928{\scriptstyle\pm.006}$ & $.0170{\scriptstyle\pm.0011}$ \\
    \cmidrule(l){2-6}
       & \multirow{5}{*}{SGD} & 0 & $78.4{\scriptstyle\pm0.8}$ & $.931{\scriptstyle\pm.004}$ & $.0174{\scriptstyle\pm.0026}$ \\
       &  & 20 & $78.5{\scriptstyle\pm0.2}$ & $.931{\scriptstyle\pm.004}$ & $.0180{\scriptstyle\pm.0026}$ \\
       &  & 50 & $77.8{\scriptstyle\pm0.9}$ & $.927{\scriptstyle\pm.004}$ & $.0194{\scriptstyle\pm.0020}$ \\
       &  & 100 & $77.2{\scriptstyle\pm0.9}$ & $.929{\scriptstyle\pm.012}$ & $.0182{\scriptstyle\pm.0046}$ \\
       &  & 150 & $76.3{\scriptstyle\pm1.3}$ & $.921{\scriptstyle\pm.006}$ & $.0208{\scriptstyle\pm.0023}$ \\
    \midrule
      \multirow{10}{*}{Orbit expansion} & \multirow{5}{*}{AdamW} & 0 & $77.8{\scriptstyle\pm1.0}$ & $.932{\scriptstyle\pm.004}$ & $.0170{\scriptstyle\pm.0025}$ \\
       &  & 20 & $78.0{\scriptstyle\pm0.6}$ & $.937{\scriptstyle\pm.010}$ & $.0145{\scriptstyle\pm.0041}$ \\
       &  & 50 & $77.9{\scriptstyle\pm0.2}$ & $.944{\scriptstyle\pm.004}$ & $.0114{\scriptstyle\pm.0013}$ \\
       &  & 100 & $77.7{\scriptstyle\pm0.4}$ & $.950{\scriptstyle\pm.004}$ & $.0090{\scriptstyle\pm.0015}$ \\
       &  & 150 & $77.5{\scriptstyle\pm0.5}$ & $.949{\scriptstyle\pm.006}$ & $.0085{\scriptstyle\pm.0015}$ \\
    \cmidrule(l){2-6}
       & \multirow{5}{*}{SGD} & 0 & $\mathbf{80.4}{\scriptstyle\pm0.4}$ & $.963{\scriptstyle\pm.003}$ & $.0057{\scriptstyle\pm.0004}$ \\
       &  & 20 & $80.4{\scriptstyle\pm0.6}$ & $.966{\scriptstyle\pm.004}$ & $.0048{\scriptstyle\pm.0009}$ \\
       &  & 50 & $80.2{\scriptstyle\pm0.5}$ & $.966{\scriptstyle\pm.003}$ & $.0050{\scriptstyle\pm.0008}$ \\
       &  & \cellcolor{gray!15}100 & \cellcolor{gray!15}$80.0{\scriptstyle\pm0.5}$ & \cellcolor{gray!15}$\mathbf{.968}{\scriptstyle\pm.003}$ & \cellcolor{gray!15}$\mathbf{.0042}{\scriptstyle\pm.0007}$ \\
       &  & 150 & $79.9{\scriptstyle\pm0.5}$ & $.967{\scriptstyle\pm.002}$ & $.0044{\scriptstyle\pm.0003}$ \\
    \bottomrule
    \end{tabular}
    \caption{\textbf{Trigger-timing ablation, full results.} Symmetrization mechanism $\times$ optimizer $\times$ trigger epoch on FashionMNIST ($N_0=5\,000$, $C_4$ augmentation, $10\,000$ test samples). For \emph{Geometric avg.} and \emph{Projection}, Epoch is the epoch at which the one-shot symmetrization is applied; for \emph{Orbit expansion} it is the number of stage-1 epochs before expansion (stage-2 runs for the remaining $500-\text{Epoch}$ epochs). All values are mean $\pm$ std over 5 seeds, taken at the best test-accuracy checkpoint. \textbf{Bold}: best across the whole table per column. \colorbox{gray!15}{Shaded}: best equivariance--accuracy trade-off.}
    \label{tab:trigger-timing-full}
\end{table*}

We empirically study the trigger-timing trade-off discussed in \cref{sec:symmetrization-of-the-variational-posterior}, and the role of the optimizer. The setup is identical to \cref{sec:symmetrization-on-image-classification} (FashionMNIST $N_{0} = 5000$, $C_{4}$ augmentation, 5 seeds; the orbit-expansion variant is the GCNN-expansion of \cref{app:expansion-along-orbit-filter-arrangement-ablation}), extended to the trigger epoch set $\{0, 20, 50, 100, 150\}$. After the trigger, all methods continue training without symmetry constraints up to epoch 500. We consider both SGD as well as AdamW.

We compare the equivariance properties of the best model (highest test accuracy) of each method in \cref{tab:trigger-timing-full}. There are several aspects of the results to be commented on.

\textbf{AdamW vs. SGD behavior.} First, the results from the main paper for the SGD runs are confirmed: an earlier trigger timing generally leads to a higher accuracy and equivariance (with the exception of the orbit expansion method, which however operates in a region of high equivariance). The AdamW runs behave differently: while their accuracies are still raised by the increased training time post re-initialization, the equivariance is generally hurt. This effect is better illustrated in Figure \ref{fig:adamWvsSGD}. This confirms the relevance of our theory, since Theorem \ref{thm:equivariance-of-vi}(ii) applies only to SGD. One should also note that although the AdamW-runs seem to drift more from the equivariant positions they are put in by our strategies, they are still made more equivariant than the baseline.

\textbf{Drift away from $H_G$.}  In Figure \ref{fig:gcnn-drift}, we plot the evolution of the symmetric KL-divergence for the example of orbit expansion. From this plot it is clear that the runs drift away from equivariance after the re-initalization. This effect persists across all runs and strategies, both for AdamW and SGD. Note that this is not in contradiction to our theory. Indeed, our results are technically only valid for gradient descent applied to the loss associated with the \emph{true predictive} maps. In the practical runs, we are using stochastic gradient descent applied to Monte Carlo approximations of the true predictive map. Importantly, the drift is a lot slower for SGD, further confirming the role of Theorem~\ref{thm:equivariance-of-vi}(ii). 

\textbf{Trigger timing.} The way the trigger timing influences the drift is subtle. In \cref{fig:proj-vs-avg-trajectory}, we plot the accuracy and equivariance metrics for the geometric averaging and projection metrics for two trigger timings. We see that both the immediately projected and the later projected runs drift -- but that the earlier trigged runs drift less. We hypothesize that when the projection is applied later in training, it will project from a point further from the typical point a random initialization yields. It is therefore likely that it is projected to a point far away from the optimal region, with large gradients as a result. This should intuitively make the discretization effects more severe, and the drift faster as a result. This is indeed what we observe in the figures: The orbit averaging methods affect the accuracy negatively to a point where it can't recover over the remaining epochs, and also drift away from equivariance quickly. We also speculate that this is one of the reasons of the performance difference of projection and geometric averaging --  it is plausible that the more non-local nature of the projection yield even more unlikely points post projection. We leave the task of providing a more complete examination of this phenomenon to future work.

\begin{figure}
    \centering
    \includegraphics[width=0.8\linewidth]{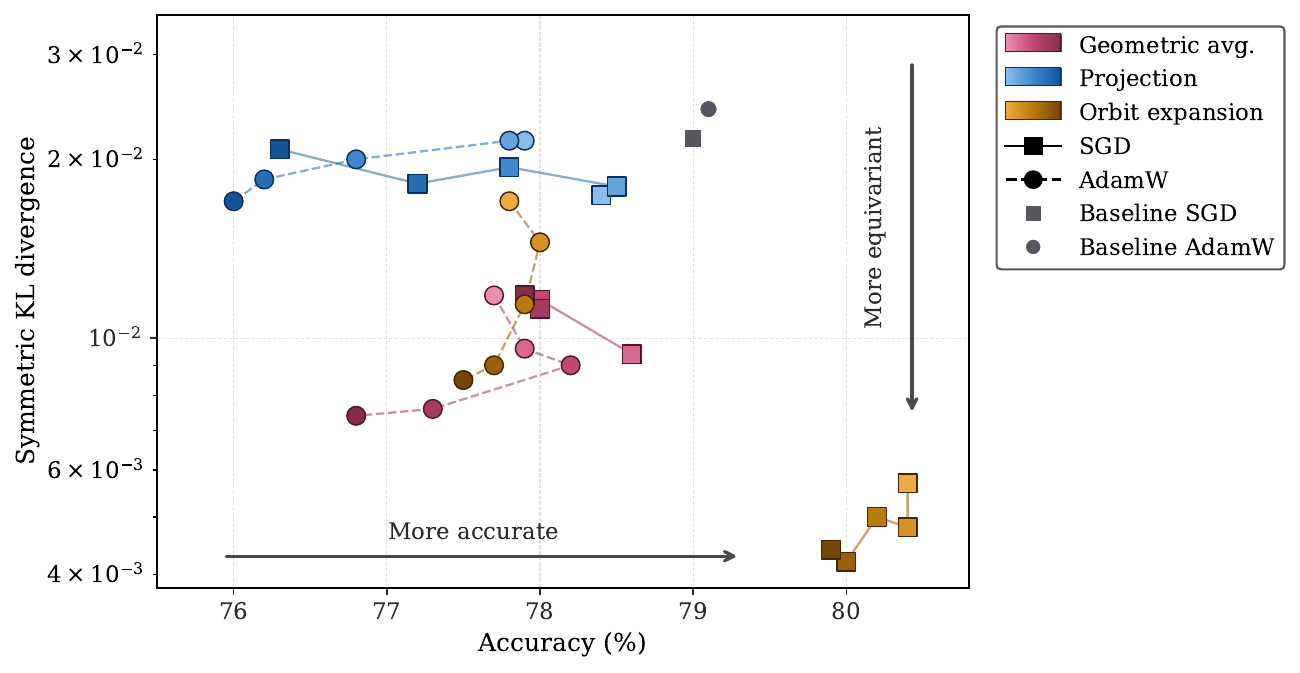}
    \caption{\textbf{Accuracy versus equivariance across methods, optimizers, and
    trigger timings} (FashionMNIST, $N_0 = 5000$, $C_4$; best-accuracy checkpoint,
    mean over 5 seeds). The $x$-axis is test accuracy and the $y$-axis is symmetric
    KL divergence on a log scale, so the best region is the \emph{lower right}
    (more accurate \emph{and} more equivariant). Hue encodes the method: rose for
    geometric averaging, blue for projection, amber for orbit expansion. Within
    each method the marker shade runs light$\rightarrow$dark for trigger epochs
    $\{0, 20, 50, 100, 150\}$ (legend). Marker shape and line style encode the
    optimizer: squares with solid lines for SGD, circles with dashed lines for
    AdamW. Grey markers are the no-symmetrization baselines (square: SGD, circle:
    AdamW). Orbit expansion under SGD sits in the lower-right corner, attaining the
    best accuracy and equivariance simultaneously; SGD matches or exceeds AdamW on
    accuracy throughout, and projection is the least equivariant of the three
    mechanisms.}
    \label{fig:adamWvsSGD}
\end{figure}

\begin{figure}[t]
    \centering
    \includegraphics[width=\linewidth]{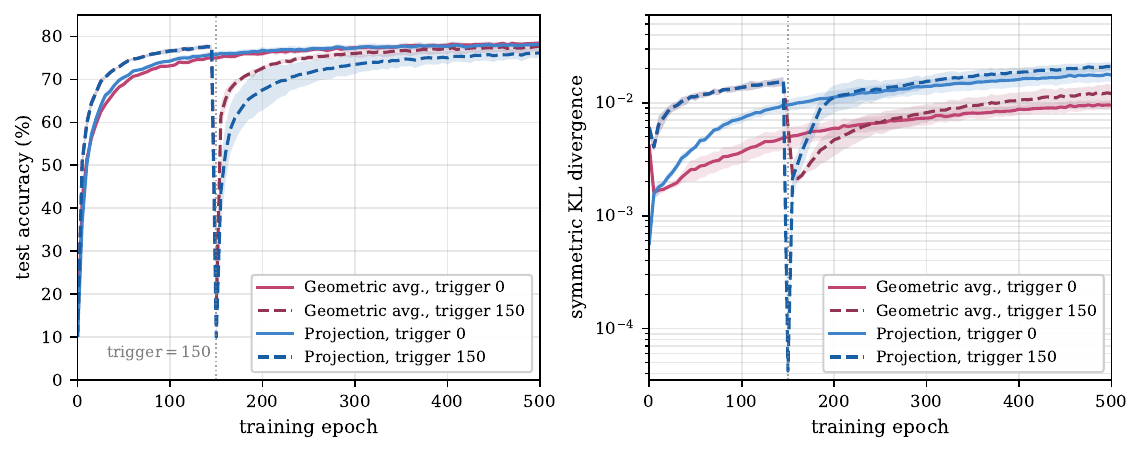}
    \caption{\textbf{Training trajectories under projection vs.\ geometric
    averaging with early vs.\ late triggers}
    (FashionMNIST, $N_0 = 5000$, $C_4$, SGD; single seed,
    representative of the 5-seed runs in \cref{tab:trigger-timing-full}).
    \textbf{Left:} test accuracy.  At trigger~150 (dotted line) both
    methods cause a brief collapse and recover, but geometric
    averaging returns almost back to its trigger-0 level
    while projection does not.
    \textbf{Right:} symmetric KL divergence on a log scale. The
    post-trigger drift out of $H_{G}$ has a similar shape and end
    value for both trigger settings within each method, consistent
    with the equivariance metric being controlled mainly by the
    drift dynamics rather than by trigger timing.}
    \label{fig:proj-vs-avg-trajectory}
\end{figure}

\begin{figure}[t]
    \centering
    \includegraphics[width=\linewidth]{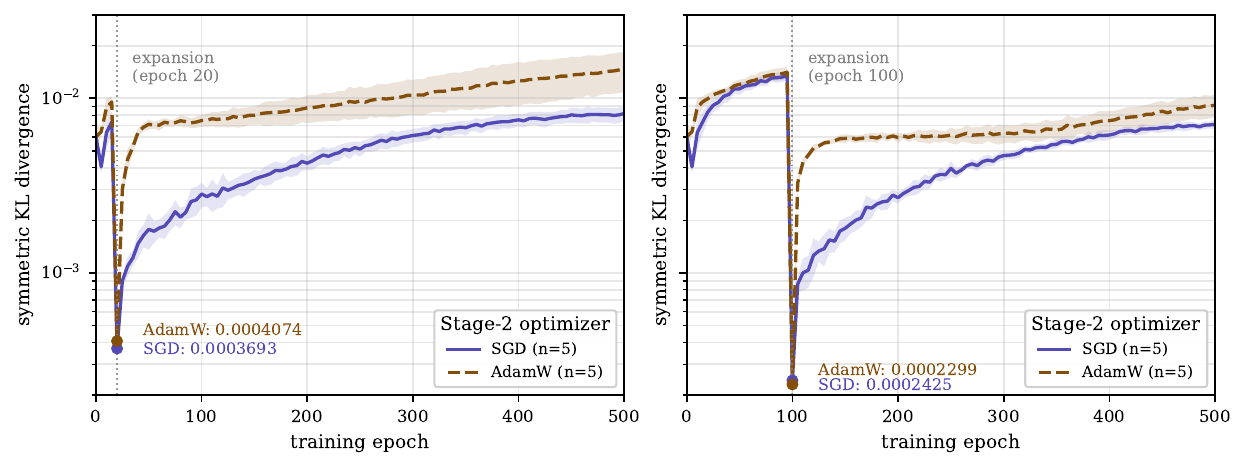}
    \caption{\textbf{Drift out of $H_{G}$ under SGD vs.\ AdamW for orbit
    expansion}, at two expansion times (FashionMNIST, $N_0 = 5000$, $C_4$;
    mean over 5 seeds, shaded bands $\pm 1$ s.d.).  \textbf{Left:} Stage-1
    $= 20$ epochs; \textbf{Right:} Stage-1 $= 100$ epochs.  Before the dotted
    line both optimizers train the same width-$1/|G|$ base network and their
    curves coincide.  At expansion the symmetric KL of both runs drops to the
    same near-zero floor.
   During Stage~2 the curves separate: SGD drifts up slowly
    AdamW faster and with larger seed variance.
    Curves trace the full trajectory to epoch~500;
    \cref{tab:trigger-timing-full} values are at the best-accuracy checkpoint,
    which for SGD precedes the late drift and is therefore lower.
    }
    \label{fig:gcnn-drift}
\end{figure}

 }


\end{document}